\documentclass{article}

\usepackage[preprint,nonatbib]{neurips_2026}

\usepackage[dvipsnames,table,xcdraw]{xcolor}
\usepackage[utf8]{inputenc} 
\usepackage[T1]{fontenc}    
\usepackage{hyperref}       
\usepackage{url}            
\usepackage{booktabs}       
\usepackage{amsfonts}       
\usepackage{nicefrac}       
\usepackage{microtype}      
\usepackage{xcolor}         
\usepackage{algorithm}
\usepackage{algorithmic}

\usepackage{wrapfig}
\usepackage{amsmath}
\usepackage{amssymb}
\usepackage{mathtools}
\usepackage{amsthm}
\usepackage{tikz}
\usepackage{xspace}
\usepackage{adjustbox}

\usetikzlibrary{arrows.meta,positioning,fit}
\theoremstyle{plain}
\newtheorem{theorem}{Theorem}[section]
\newtheorem{proposition}[theorem]{Proposition}
\newtheorem{lemma}[theorem]{Lemma}

\theoremstyle{definition}
\newtheorem{definition}[theorem]{Definition}
\newtheorem{assumption}[theorem]{Assumption}
\newtheorem{remark}[theorem]{Remark}
\newcommand{\Eqref}[1]{Eq.~\ref{#1}}
\usepackage[textsize=tiny]{todonotes}
\usepackage[most]{tcolorbox}
\usepackage[square,numbers]{natbib}
\usepackage{enumitem}

\newcommand{\methodlong}{debiased neural operator\xspace}
\newcommand{\method}{\mbox{DOPE}\xspace}
\newcommand{\greentext}[1]{\textcolor{ForestGreen}{#1}}

\definecolor{scalarRed}{HTML}{A50040}

\newcommand*\circledblue[1]{\tikz[baseline=(char.base)]{
            \node[shape=circle,draw=NavyBlue!60,fill=NavyBlue!10,thick,inner sep=1pt] (char) {\scriptsize\textsf#1};}}

\newcommand*\circledscalarRed[1]{\tikz[baseline=(char.base)]{
        \node[shape=circle,draw=scalarRed!60,fill=scalarRed!10,thick,inner sep=1pt] (char) {\scriptsize\textsf#1};}}

\newcommand*\diff{\mathop{}\!\mathrm{d}}
\newcommand*\Diff{\mathop{}\!\mathrm{D}}

\title{Debiased neural operators for estimating functionals}

\author{
\textbf{Konstantin Hess}\textsuperscript{1,2,*},
\textbf{Dennis Frauen}\textsuperscript{1,2},
\textbf{Niki Kilbertus}\textsuperscript{2,3,4},
\textbf{Stefan Feuerriegel}\textsuperscript{1,2}\\[0.8em]
\textsuperscript{1}LMU Munich \quad
\textsuperscript{2}Munich Center for Machine Learning \quad
\textsuperscript{3}TU Munich \quad
\textsuperscript{4}Helmholtz Munich \\
\\
\textsuperscript{*}{Corresponding author: \texttt{k.hess@lmu.de}}
}

\begin{document}

\maketitle

\begin{abstract}
Neural operators are widely used to approximate solution maps of complex physical systems. In many applications, however, the goal is not to recover the full solution trajectory, but to summarize the solution trajectory via a scalar target quantity (e.g., a functional such as time spent in a target range, time above a threshold, accumulated cost, or total energy). In this paper, we introduce \method (\emph{\methodlong}): a semiparametric estimator for such target quantities of solution trajectories obtained from neural operators. \method is broadly applicable to settings with both partial and irregular observations and can be combined with arbitrary neural operator architectures. We make three main contributions. (1)~We show that, in contrast to \method, na\"ive plug-in estimation can suffer from first-order bias. (2)~To address this, we derive a novel one-step, Neyman-orthogonal estimator that treats the neural operator as a high-dimensional nuisance mapping between function spaces, and removes the leading bias term. For this, \method uses a weighting mechanism that simultaneously accounts for irregular observation designs and for how sensitive the target quantity is to perturbations of the underlying trajectory. (3)~To learn the weights, we extend automatic debiased machine learning to \emph{operator-valued nuisances} via Riesz regression. We demonstrate the benefits of \method across various numerical experiments. 
\end{abstract}

\section{Introduction}

\begin{wrapfigure}{r}{0.5\textwidth}
\vspace{-1.25cm}
  \centering
  \includegraphics[width=0.5\textwidth, trim=0cm 0cm 0cm 0cm, clip]{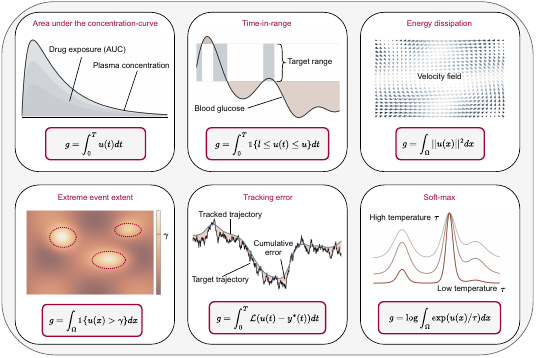}
\vspace{-0.5cm}
\caption{\textbf{Examples:} In many scientific applications, solution trajectories are summarized by \textcolor{scalarRed}{scalar quantities} (i.e., functionals $g$).} 
\vspace{-0.5cm}
\label{fig:functional_examples}
\end{wrapfigure}

Neural operators \cite{Kovachki.2023NO, Li.2021NO, Lu.2021NO} are a powerful framework for learning solution maps of complex physical systems governed by partial differential equations (PDEs). In particular, neural operators can efficiently learn mesh-independent approximations of PDE solutions that generalize across varying input functions, including different boundary conditions and forcing terms \cite{Azizzadenesheli.2024NO, Rosofksy.2023NO}. As a result, they are increasingly used in various applications such as scientific computing \cite{Azizzadenesheli.2024NO, Cao.2024NO, Faroughi.2024NO}, climate science \cite{Kurth.2023NO,  Peng.2024NO, Wen.2023NO}, and engineering \cite{Kobayashi.2024NO, Shukla.2024NO, Wang.2025NO}.


In this paper, \textbf{\textit{we aim to learn a scalar target quantity (i.e., a functional) of the solution trajectory}}. This is relevant in many scientific applications, where the main quantity of interest is not the full solution trajectory produced by a PDE solver or its neural operator approximation. Instead, the aim is often to summarize the trajectory into a \textcolor{scalarRed}{scalar target quantity} that is directly relevant for decision-making or scientific interpretation (see Figure~\ref{fig:functional_examples}). 

\underline{\textbf{Examples:}} \circledscalarRed{1} In medicine, physicians care about whether a patient’s vital signs remain within a target range, such as maintaining blood glucose above a critical threshold or keeping heart rate and respiratory rate within a safe range \cite{Hashiguchi.2023FUNC, Iqbal.2022FUNC, Yang2024.FUNC, Zhou.2025FUNC}. \circledscalarRed{2} In climate modeling, researchers quantify whether forecasts exceed critical thresholds, such as extreme temperatures \cite{Ilan.2025FUNC} \circledscalarRed{3} In fluid dynamics and energy systems, the goal is to measure total fluxes or energy over time \cite{Grachev.2020FUNC}. \circledscalarRed{4} In control systems, engineers quantify the accumulated tracking error or other stability metrics over time \cite{Dong.2025FUNC, Widdershins.2026FUNC}. Importantly, such functionals typically depend on the trajectory in complex, nonlinear ways.

{\emph{Why a na{\"i}ve estimation is problematic:}} A na{\"i}ve strategy for estimating such functionals is the following (which we later refer to as \textit{plug-in estimator}): first train a neural operator, then evaluate the functional of interest on the predicted trajectory, and use the result as the final estimate. However, we later show that this na\"ive plug-in estimator can be systematically biased, even when the neural operator predicts the solution trajectory reasonably well. The reason is that the quantity of interest is a nonlinear summary of the trajectory, so approximation errors in the learned solution map do not simply average out but instead propagate through the functional and thus accumulate into a non-negligible error in the final estimate. We refer to this systematic error as  \emph{plug-in bias}.\footnote{Plug-in bias is a know problem in other fields such as causal machine learning \cite{Curth.2020, Kennedy.2022}. Here, we extend the idea to neural operators, yet which comes with new challenges as we move from \textit{function-valued} nuisances to \textit{operator-valued} nuisances.}.



In this paper, we introduce the \emph{\methodlong} (\method): a semiparametric framework for estimating scalar summaries of solution trajectories with neural operators. Our \method framework is broadly applicable to settings with both partial and irregular observations. Further, \method is model-agnostic and can be instantiated with \emph{any} neural operator architecture (e.g., such as Fourier neural operators \cite{Li.2021NO} or DeepONet \cite{Lu.2021NO}), and is therefore broadly applicable across scientific applications. 

To construct \method, we derive a novel one-step, Neyman-orthogonal estimator that treats the neural operator as a high-dimensional nuisance mapping between function spaces. Specifically, we use the Riesz representer to build a score that is first-order insensitive to errors in the learned neural operator. As a result, approximation errors of the neural operator affect the estimated functional only through higher-order terms, and thus remove the plug-in bias. Methodologically, \method corrects the neural-operator-induced bias using pointwise residuals of the solution trajectory, and, for this, uses a weighting mechanism that simultaneously accounts for irregular observation designs and for how sensitive the target quantity is to perturbations of the underlying trajectory. To learn the weights, we propose a tailored automatic debiasing approach based on Riesz regression and automatic differentiation.

\paragraph{Contributions.}
In summary, our contributions are as follows:\footnote{Code is available at \url{https://anonymous.4open.science/r/1C71}.}
\begin{description}[leftmargin=*]
    \item[\circledblue{1}] We develop a novel semiparametric framework called \method for inference on scalar functionals of neural operator outputs. Crucially, \method removes the plug-in bias of na{\"i}ve estimation strategies. 

    \item[\circledblue{2}] We further develop an automatic debiasing procedure for operator-valued nuisance functions via Riesz regression and automatic differentiation. In this way, we provide a non-trivial extension of automatic debiased machine learning (DML) \cite{Chernozhukov.2021, Chernozhukov.2022} from \textit{function-valued} to \textit{operator-valued} nuisances.

    \item[\circledblue{3}] We establish various theoretical properties of \method, including asymptotic normality and confidence intervals (under standard regularity conditions). Further, our framework easily integrates prediction-powered inference (PPI) \cite{angelopoulos.2023, angelopoulos.2023a} for neural operators. 
    
\end{description}

\section{Related Work}

$\bullet$~\textbf{Plug-in bias:} 
In many statistical problems, estimating the target parameter of interest requires first estimating nuisance functions (i.e., auxiliary functions that are not themselves of direct interest). Orthogonal estimation \cite{Foster.2023, Morzywolek.2023} constructs estimators so that small errors in these nuisance functions affect the target only at second order. A common issue is that errors in these nuisance estimates propagate into the final estimate, often leading to systematic bias---commonly referred to as \emph{plug-in bias} \cite{Curth.2020, Kennedy.2022}. 

Orthogonal estimation \cite{Foster.2023, Morzywolek.2023} addresses the problem of plug-in bias by constructing estimators that are insensitive to small errors in the nuisance functions. Intuitively, this means that approximation errors do not propagate as first-order biases, but only at second order in the target quantity. In semiparametric theory, this is typically achieved by deriving influence-function-based moment conditions \cite{Hines2022, Ichimura.2022}. To remove the first-order bias, one then aims to derive a one-step estimator \cite{Kennedy.2022}, which ``corrects'' (i.e., debiases) the plug-in estimate with the sample average of a debiasing score, and thereby removes the first-order bias term.

However, the problem of plug-in bias has been studied in several settings, such as for causal machine learning \cite{Curth.2020, Kennedy.2022}, but \underline{\textbf{not}} for neural operators. Importantly, existing works typically focus on settings that have \emph{function-valued} nuisances \cite[e.g.,][]{Hess.2026b, Hess.2026c, Melnychuk.2024b}. $\Rightarrow$ \textit{In contrast, our setting involves \textbf{operator-valued} nuisances, which introduces new challenges and requires tailored estimation techniques.}

$\bullet$~\textbf{Automatic debiasing:}  
Orthogonal estimation has attractive second-order bias properties, but applying it often requires deriving the \textit{debiasing score} analytically. A remedy is automatic debiasing, which aims to construct this correction directly from the target functional \cite{Carone.2019, Luedtke.2024}.

This idea can be implemented via automatic debiased machine learning, which reduces debiasing to estimation of an auxiliary function through Riesz regression rather than requiring a closed-form derivation of the correction term \cite{Chernozhukov.2021, Chernozhukov.2022d}. Prior work studies this idea for \emph{function-valued} nuisances using random forests and multi-layer perceptrons \cite{ Chernozhukov.2022}, for regularized Riesz regression \cite{Chernozhukov.2022e}, under covariate shift \cite{Chernozhukov.2023d}, and for smooth functionals of M-estimands \cite{vdL.2025}. $\Rightarrow$ \textit{In contrast, we extend automatic debiasing to \textbf{operator-valued} nuisance functions.}

\section{Problem Formulation}\label{sec:setup}

$\bullet~$\textbf{PDE setup:} 
We consider the standard setting of operator learning \cite{Kovachki.2023NO}. Let $\Omega \subset \mathbb{R}^d$ be a bounded domain.\footnote{We make no formal distinction between time and spatial domain, and w.l.o.g. $\Omega$ can refer to both.}
Let $\mathcal{A}$ denote a Banach space of admissible input functions (e.g., coefficient functions, boundary conditions, or forcing terms). 
For each admissible coefficient function $a \in \mathcal{A}$, we consider a PDE of the generic form
\begin{align}
\mathcal{T}_a u = f,
\end{align}
with homogeneous boundary conditions. We assume that, for each $a \in \mathcal{A}$, the PDE admits a unique weak solution
\begin{align}
 \mathcal T_a^{-1}f = u \in \mathcal{U},
\end{align}
where, throughout, we take $\mathcal U=L^2(\Omega,\mu)$, which is the Hilbert space of square-integrable functions on $\Omega$ with inner product induced by Lebesgue measure $\mu$. We denote by
\begin{align}
S_0 : \mathcal{A} \to \mathcal{U}, 
\qquad 
S_0(a)  = u(a, \cdot),
\end{align}
the \emph{unknown solution operator}, which maps input functions to solution trajectories.

$\bullet~$\textbf{Neural operators:} Neural operators are learned approximations of the solution operator $S_0$ \cite{Kovachki.2021NO, Li.2021NO, Lu.2021NO}. Given training data consisting of input functions and partial solution observations, neural operators learn a mapping
$\hat{S} : \mathcal{A} \to \mathcal{U},$ 
such that $\hat{S}(a) \approx S_0(a)$. They operate directly on functional inputs and outputs defined on the continuous domain $\Omega$, which can be evaluated at arbitrary locations $x\in\Omega$. We provide an extended related work on neural operators in Supplement~\ref{sec:extended_related_work}.

$\bullet~$\textbf{Partial and irregular measurements:}
In many scientific workflows, \emph{a central challenge} is that the evaluation locations depend on the input through measurement protocols, adaptive solvers, or experimental constraints \cite{Berg.2019, Zhou.2025FUNC}. Hence,  solution trajectories are often observed only \emph{partially and irregularly} \cite{Chen.2018, Hess.2025, Hess.2024, Iakovlev.2023}. 
Accordingly, let $A$ denote a random coefficient function with associated solution trajectory $S_0(A)\in\mathcal U$. Rather than observing the full trajectory, we observe noisy point evaluations
\begin{align}
Y_k = S_0(A)(X_k) + \varepsilon_k,
\qquad k=1,\ldots,K,
\end{align}
where $K\in\mathbb N$ may be random and depend on $A$, where $X_k\in\Omega$ is an evaluation location, and where the noise satisfies
$\mathbb E[\varepsilon_k\mid A,X_k]=0$ 
and
$\mathbb E[\varepsilon_k^2\mid A,X_k]<\infty$.
We model the sampling mechanism through a joint law $\mathbb P$ on $\mathcal A\times\Omega$ with factorization
$\mathbb P(\diff a,\diff x)=\mathbb P(\diff x\mid a)\,\mathbb P(\diff a)$,
where $\mathbb P(\cdot\mid a)$ is an input-dependent design distribution on $\Omega$. Given $(A,K)$, we assume that the pairs $\{(X_k,Y_k)\}_{k=1}^K$ are conditionally i.i.d.\ draws from  a generic pair $(X,Y)$ with
$X\mid A=a\sim \mathbb P(\cdot\mid a)$.
We observe $n$ i.i.d.\ realizations
\begin{align}
O_i=\Big(A_i,K_i,\{(X_{ik},Y_{ik})\}_{k=1}^{K_i}\Big),
\qquad i=1,\ldots,n,
\end{align}
and write
$\mathcal D=\{O_i\}_{i=1}^n$.
Throughout, all $L^2(\mathbb P)$ norms are taken with respect to the joint law of $(A,X)$. This setting captures the common regime where generating new inputs is relatively cheap, while dense evaluations of the corresponding solution trajectory are costly \cite{Meyer.2023PDE, Kim.2025PDE}.

$\bullet$~\textbf{Identifiability:}
In order to ensure identifiability from observations sampled under $\diff\mathbb P$, we impose a standard design overlap assumption \cite{Curth.2021, Hess.2025}.

\begin{assumption}[Design overlap]\label{assumption:design}
\emph{Let $\mu$ denote a fixed reference measure on $\Omega$ (e.g., Lebesgue measure). Whenever it exists, we define the corresponding {inverse design weight} as}
\begin{align}
\xi_0(a)(x) \coloneqq \frac{\diff \mu}{\diff \mathbb{P}(\cdot\mid a)}(x).
\end{align}

\emph{Then, for $\mathbb{P}$-almost every $a \in \mathcal{A}$, the reference measure $\mu$ is absolutely continuous with respect to $\mathbb{P}(\cdot\mid a)$. Moreover, there exists a constant $C<\infty$ such that}
\begin{align}
0 < \xi_0(a)(x) \le C
\qquad
\mathbb{P}(\cdot\mid a)\text{-a.e. }x \in \Omega.
\end{align}
\end{assumption}

Finally, for $\mathbb P$-almost every $a\in\mathcal A$, we define the conditional Hilbert space
\begin{align}
\mathcal H_a \coloneqq L^2(\Omega,\mathbb P(\cdot\mid a))
\end{align}
with inner product
$\langle f,h\rangle_a \coloneqq \int_\Omega f(x)h(x)\,\diff\mathbb P(x\mid a)$.
Under Assumption~\ref{assumption:design}, $\mathcal H_a$ {embeds continuously into} $\mathcal U=L^2(\Omega,\mu)$.

$\bullet~$\textbf{Objective:}
Let $g:\mathcal{U}\to\mathbb{R}$ be our scalar-valued functional of interest. Throughout, we let $g$ be twice Fr\'echet differentiable in a neighborhood of $\{S_0(A)\}$ with uniformly bounded second derivative, i.e.,
\begin{align}\label{ass:g_smooth}
g(u+h) = g(u)+\Diff g_{u}(h) + R(u,h)
\end{align}
with $|R(u,h)| \lesssim \| h\|_{\mathcal{U}}^2$ for all $u$ in this neighborhood and all sufficiently small $h$.\footnote{We use standard notation $a(x)\lesssim b(x)$ to denote that there exists $C>0$ such that $a(x)\leq C b(x) $ for all $x$.} This is naturally satisfied by many common functionals in practice, including linear functionals, polynomial energy-type functionals, norm-based functionals, or weighted transport functionals.

\begin{tcolorbox}[
  colback=BrickRed!8,
  colframe=BrickRed!75!black,
  boxrule=0.8pt,
  arc=1.5mm,
  left=1.2mm,
  right=1.2mm,
  top=1mm,
  bottom=1mm,
  title=\textbf{Objective},
  fonttitle=\bfseries,
  coltitle=white,
  enhanced,
  breakable
]
We aim to perform inference on the averaged scalar functional
\begin{align}\label{eq:target}
\theta \coloneqq \mathbb{E}\!\left[g(S_0(A))\right],
\end{align}
where the expectation is taken with respect to the population distribution of $A$.
\end{tcolorbox}

\begin{tcolorbox}[
  colback=Gray!8,
  colframe=Gray!75!black,
  boxrule=0.8pt,
  arc=1.5mm,
  left=1.2mm,
  right=1.2mm,
  top=1mm,
  bottom=1mm,
  title=\textbf{Examples (see Figure~\ref{fig:functional_examples})},
  fonttitle=\bfseries,
  coltitle=white,
  enhanced,
  breakable
]

$\bullet~$\emph{Medicine.} 
$A$ encodes subject- or drug-specific parameters (e.g., from pharmacokinetics); $S_0(A)(t)$ is the concentration--time curve; and $g(S_0(A))$ represents the time spent in a target range or above a clinically relevant threshold.

$\bullet~$\emph{Fluid dynamics and energy systems.}
$A$ represents coefficients, forcing terms, or boundary conditions; $S_0(A)(x)$ is the resulting field; and $g(S_0(A))$ represents the flux over a spatial region or energy over time.

$\bullet~$\emph{Control systems.}
$A$ parameterizes system dynamics or disturbances; $S_0(A)(t)$ is the resulting trajectory; and $g(S_0(A))$ represents accumulated tracking error or another scalar measure of stability and performance over time.

\end{tcolorbox}

By the Riesz representation theorem \cite{Rieszrepresenter}, for every Fr\'echet differentiable functional 
$g:\mathcal U\to\mathbb R$ and every $u\in\mathcal U$, there exists a unique 
$w_g(u)\in\mathcal U$ such that
\begin{align}
\Diff g_u(h)
=
\langle w_g(u),h\rangle_{\mathcal U}
=
\int_\Omega w_g(u)(x)\,h(x)\, \diff \mu(x).
\end{align}

\section{\method: Debiased estimation of functionals for neural operators}

In the following, we focus on debiased estimation of the scalar functional in \Eqref{eq:target}. 
For this, we proceed as follows: \circledblue{1}~We first show why the na\"ive estimator leads to plug-in bias (§\ref{sec:plugin}). \circledblue{2}~We then introduce \method: a \textit{one-step Neyman-orthogonal estimator} (see Figure \ref{fig:method}). For this, we derive the \emph{debiasing score} by expressing the Fréchet derivative as an inner product with a Riesz representer, which yields a first-order debiasing procedure (§\ref{sec:onestep}). \circledblue{3}~We show several desirable \emph{theoretical properties} of \method (§\ref{sec:theory}). \circledblue{4}~Finally, we present a tailored approach for \emph{automatic DML with operator-valued nuisances} using the Riesz representer (§\ref{sec:autodml}). We provide the pseudo-code in Algorithm~\ref{alg:algorithm} in Supplement~\ref{sec:algorithm}.


\subsection{Why na\"ive plug-in estimation is problematic}\label{sec:plugin}

A na\"ive estimation strategy for \Eqref{eq:target} is to estimate $S_0(A)$ with a neural operator $\hat{S}(A)$ and treat the surrogate as the ground-truth operator. However, such plug-in estimation leads to severe plug-in bias:
\begin{tcolorbox}[
  colback=Gray!8,
  colframe=Gray!75!black,
  boxrule=0.8pt,
  arc=1.5mm,
  left=1.2mm,
  right=1.2mm,
  top=1mm,
  bottom=1mm,
  fonttitle=\bfseries,
  coltitle=white,
  enhanced,
  breakable
]
\begin{proposition}[First-order bias of plug-in estimators]\label{prop:pluginbias}
Let $\Delta(\cdot)=\hat S(\cdot)-S_0(\cdot)$.
Then
\begin{align}
\big| \mathbb{E}[g(\hat S(A))]-\theta \big|
\;\lesssim\;
\big| \mathbb{E}\!\left[\Diff g_{S_0(A)}(\Delta(A))\right] \big|
\;+\;
\|\Delta\|_{L^2(\mathbb P)}^2,
\end{align}
i.e., the neural operator error propagates linearly through the Fr\'echet differential, and induces \textbf{first-order bias}.
\end{proposition}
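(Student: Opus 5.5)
The plan is to apply the second-order expansion of $g$ in \Eqref{ass:g_smooth} pointwise in $A$ and then take expectations. For $\mathbb P$-almost every realization $a$, set $u = S_0(a)$ and $h = \Delta(a) = \hat S(a) - S_0(a)$. Provided $S_0(a)$ lies in the neighborhood where $g$ is twice Fr\'echet differentiable and $\|\Delta(a)\|_{\mathcal U}$ is small enough, we get
\begin{align*}
g(\hat S(a)) - g(S_0(a)) = \Diff g_{S_0(a)}(\Delta(a)) + R(S_0(a),\Delta(a)),
\qquad |R(S_0(a),\Delta(a))| \lesssim \|\Delta(a)\|_{\mathcal U}^2 .
\end{align*}
Taking expectations over $A$ and using $\theta = \mathbb E[g(S_0(A))]$ gives
\begin{align*}
\mathbb E[g(\hat S(A))] - \theta = \mathbb E[\Diff g_{S_0(A)}(\Delta(A))] + \mathbb E[R(S_0(A),\Delta(A))].
\end{align*}
The triangle inequality then bounds the absolute bias by the first term in absolute value plus $\mathbb E|R| \lesssim \mathbb E\|\Delta(A)\|_{\mathcal U}^2$. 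The linear term is kept inside the absolute value after taking the expectation, exactly as in the statement, so no cancellation is lost.

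The second step converts $\mathbb E\|\Delta(A)\|_{\mathcal U}^2$ into $\|\Delta\|_{L^2(\mathbb P)}^2$, which is a norm under the joint law of $(A,X)$. By Assumption~\ref{assumption:design},
\begin{align*}
\|\Delta(a)\|_{\mathcal U}^2 = \int_\Omega \Delta(a)(x)^2\,\xi_0(a)(x)\,\diff\mathbb P(x\mid a) \le C\,\|\Delta(a)\|_{\mathcal H_a}^2 .
\end{align*}
This is the continuous embedding $\mathcal H_a \hookrightarrow \mathcal U$ noted after the assumption. Integrating over $a$ gives $\mathbb E\|\Delta(A)\|_{\mathcal U}^2 \le C\,\|\Delta\|_{L^2(\mathbb P)}^2$, with the constant absorbed into $\lesssim$.

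The main obstacle is that the remainder bound only holds locally: for $u$ near $\{S_0(A)\}$ and for sufficiently small $h$. I would handle this by making explicit the implicit standing assumption that $\hat S$ is close to $S_0$ uniformly, i.e., $\sup_a \|\Delta(a)\|_{\mathcal U} \le \delta$ almost surely for some $\delta$ within the differentiability radius. As a fallback, I would use the integral form of Taylor's theorem along the segment $S_0(a) + t\Delta(a)$, $t \in [0,1]$. Together with the uniformly bounded second derivative, this gives $|R| \le \tfrac12 \sup\|\Diff^2 g\|\,\|\Delta(a)\|_{\mathcal U}^2$ whenever the segment stays in the neighborhood, and the constant is uniform in $a$.

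I would also check integrability: $\mathbb E|\Diff g_{S_0(A)}(\Delta(A))|$ is finite whenever $\|w_g(S_0(A))\|_{\mathcal U}$ is bounded or square-integrable, by Cauchy--Schwarz. Finally, I would remark that the first term is genuinely first order in $\Delta$, which justifies the interpretation of first-order plug-in bias.
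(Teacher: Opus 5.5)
Your proposal is correct and matches the paper's proof step for step: expand $g$ around $S_0(A)$ in direction $\Delta(A)$, take expectations, apply the triangle inequality, and convert $\mathbb E\|\Delta(A)\|_{\mathcal U}^2$ into $\|\Delta\|_{L^2(\mathbb P)}^2$ via the bound $\xi_0\le C$ from the overlap assumption. You also state two assumptions the paper leaves implicit: that the remainder bound needs $\|\Delta(A)\|_{\mathcal U}$ to be uniformly small, and that the linear term must be integrable.
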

\end{tcolorbox}

\begin{proof}
    See Supplement~\ref{appendix:proofs_plugin}.
\end{proof}

Proposition~\ref{prop:pluginbias} shows that the estimation error of the plug-in estimator is, in general, linear in $\mathbb{E}[\Diff g_{S_0(A)}(\Delta(A))]$ and thus does not vanish. As a result, root-$n$ inference from the plug-in estimator requires this first-order bias to be negligible, i.e., of order $o(n^{-1/2})$. To understand what this implies, suppose the Fr\'echet derivative admits a uniformly bounded Riesz representer, so that 
$|\Diff g_{S_0(A)}(\Delta(A))|\lesssim \|\Delta(A)\|_{L^2(\mu)}$. Then, by overlap, this implies that the linear term can be controlled by
$
\big|\mathbb{E}[\Diff g_{S_0(A)}(\Delta(A))]\big|
\lesssim
\|\Delta\|_{L^2(\mathbb P)} .
$
Hence, a sufficient condition for root-$n$ inference is that the neural operator error satisfies
$
\|\Delta\|_{L^2(\mathbb P)} = o(n^{-1/2}).
$
However, such rates are typically \emph{highly unrealistic} in practice. 
Intuitively, the problem of the plug-in estimator is that nonlinear functionals propagate operator errors \emph{linearly} through the Fr\'echet derivative, so even small approximation errors accumulate.


\subsection{Debiased estimation via \method}\label{sec:onestep}

As a remedy, we now derive a \textbf{one-step Neyman-orthogonal estimator} (\method). 
In contrast to the na\"ive plug-in estimator from Proposition~\ref{prop:pluginbias}, \method achieves root-$n$ convergence under substantially weaker accuracy requirements on the neural operator.

The starting point is the leading bias term of the na\"ive plug-in estimator,
$\mathbb{E}\!\left[\Diff g_{S_0(A)}\big(\Delta(A)\big)\right]$,
where $\Delta(\cdot) = S_0(\cdot) - \hat{S}(\cdot)$ denotes the operator error. 
To eliminate the plug-in bias, we seek a correction term that \emph{recovers the value of the functional derivative} $\Diff g_{S_0(A)}(\Delta(A))$ using observable quantities.

\begin{wrapfigure}{r}{0.7\textwidth}
\vspace{-0.6cm}
  \centering
  \fbox{\includegraphics[width=0.7\textwidth, trim=0.5cm 0.2cm 0.5cm 0.2cm, clip]{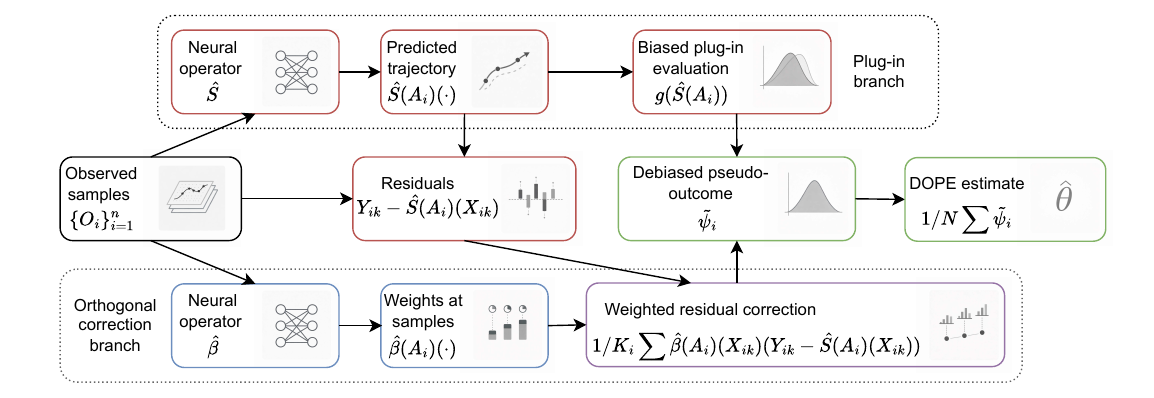}}
\vspace{-0.5cm}
\caption{\textbf{Overview of \method} (our \methodlong).}
\vspace{-0.7cm}
\label{fig:method}
\end{wrapfigure}

However, this is \textbf{non-trivial}: the perturbation $\Delta(A)$ is \textit{not} observed as a function, but only through noisy point evaluations $Y_k - \hat S(A)(X_k)$ at locations $X_k$ drawn from the sampling design. Hence, we can \textit{not} directly evaluate the functional derivative.

Instead, we leverage that $\Diff g_{S_0(A)}(\cdot)$ is a linear functional and can therefore be represented via a Riesz representer. That is, for $\mathbb P$-almost every $a\in\mathcal A$, there exists a function $\beta_0(a)(\cdot)\in \mathcal H_a$ which is the unique Riesz representer satisfying
\begin{align}\label{eq:riesz}
\mathbb{E}\!\left[\beta_0(A)(X)\,h(X)\mid A=a\right]
=
\Diff g_{S_0(a)}(h),
\qquad
\forall h\in\mathcal H_a.
\end{align}
This representation is crucial: it expresses the functional derivative as an expectation \textit{over pointwise evaluations under the sampling design}. Applying \Eqref{eq:riesz} with generic $A\in \mathcal A$ and $h=\Delta(A)$ shows that the weighted residual $\beta_0(A)(X)\big(Y - \hat S(A)(X)\big)$ recovers the leading plug-in bias in expectation. This suggests the \textit{debiasing score}
\begin{align}\label{eq:score}
\psi(O)
=
g(S_0(A))-\theta
+
\frac{1}{K}\sum_{k=1}^K
\beta_0(A)(X_k)\big(Y_k-S_0(A)(X_k)\big),
\end{align}
which we formally derive in Lemma~\ref{lem:score_derivation}. Intuitively, the weighted residual term reconstructs the first-order bias $\Diff g_{S_0(A)}(\Delta(A))$ and thus provides the required correction. 
In Supplement~\ref{appendix:proofs_onestep}, we show that the debiasing score $\psi(O)$ is Neyman-orthogonal at the truth. 
Motivated by this, we now define \method.

\begin{tcolorbox}[
  colback=NavyBlue!8,
  colframe=NavyBlue!75!black,
  boxrule=0.8pt,
  arc=1.5mm,
  left=1.2mm,
  right=1.2mm,
  top=1mm,
  bottom=1mm,
  fonttitle=\bfseries,
  coltitle=white,
  enhanced,
  breakable
]
\begin{definition}[\method]\label{def:onestep}
Given i.i.d.\ observations $\{O_i\}_{i=1}^n$, consider the debiasing score in \Eqref{eq:score}.
We replace the unknown quantities $(S_0,\beta_0)$ by estimators $(\hat S,\hat\beta)$. We further define the pseudo-outcome
\begin{align}
\tilde\psi_i
=
g(\hat S(A_i))
+
\frac{1}{K_i}\sum_{k=1}^{K_i}
\hat\beta(A_i)(X_{ik})
\big(Y_{ik}-\hat S(A_i)(X_{ik})\big).
\end{align}
In \method, the \textbf{one-step estimator} of the target functional in \Eqref{eq:target} is given by 
\begin{align}
\hat\theta
=
\frac{1}{n}\sum_{i=1}^n \tilde\psi_i .
\label{eq:onestep}
\end{align}
\end{definition}
\end{tcolorbox}

\subsection{Theoretical properties}
\label{sec:theory}

The key property of \method is \textit{Neyman-orthogonality}: first-order errors
in the neural operator $\hat{S}$ and in the correction term $\hat{\beta}$ do \textbf{not} affect the leading asymptotics. As a result, nuisance errors enter the remainder only at \emph{second-order}. We show this formally in Theorem~\ref{thm:remainder}.

\begin{tcolorbox}[
  colback=ForestGreen!8,
  colframe=ForestGreen!75!black,
  boxrule=0.8pt,
  arc=1.5mm,
  left=1.2mm,
  right=1.2mm,
  top=1mm,
  bottom=1mm,
  fonttitle=\bfseries,
  coltitle=white,
  enhanced,
  breakable
]
\begin{theorem}[Second-order remainder]\label{thm:remainder}
Let $\hat\theta$ be the \method estimate in \Eqref{eq:onestep}. Let $\hat S$ and $\hat\beta$ be estimated via cross-fitting. Then, we have
\begin{align}
\big|\mathbb{E}[\hat\theta]-\theta\big|
\;\lesssim\;
\|\hat S-S_0\|_{L^2(\mathbb P)}^2
+
\|\hat\beta-\beta_0\|_{L^2(\mathbb P)}^2.
\end{align}
In particular, there is \textbf{no first-order bias} from either nuisance component.
\end{theorem}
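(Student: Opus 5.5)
Because of cross-fitting, we condition on the auxiliary sample used to fit $(\hat S,\hat\beta)$. This lets us treat both nuisances as fixed (deterministic) maps, with the evaluation observation $O$ independent of them. Then $\mathbb E[\hat\theta]=\mathbb E[\tilde\psi(O;\hat S,\hat\beta)]$, where the outer expectation also averages over the auxiliary sample. It therefore suffices to bound the conditional bias $\mathbb E[\tilde\psi(O;\hat S,\hat\beta)]-\theta$ for fixed $(\hat S,\hat\beta)$ and then integrate. Throughout, write $\Delta=\hat S-S_0$ (sign as in Proposition~\ref{prop:pluginbias}) and $\delta_\beta=\hat\beta-\beta_0$.

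\textbf{Step 1: reduce the residual term.} Given $(A,K)$, the pairs $(X_k,Y_k)$ are conditionally i.i.d., and $\mathbb E[Y_k\mid A,X_k]=S_0(A)(X_k)$. Iterating expectations over $K$ therefore gives
\begin{align*}
\mathbb E\Big[\tfrac1K\textstyle\sum_k \hat\beta(A)(X_k)\big(Y_k-\hat S(A)(X_k)\big)\Big]
=-\mathbb E\big[\hat\beta(A)(X)\,\Delta(A)(X)\big].
\end{align*}
We need $K\ge1$ almost surely for this. Hence the conditional bias equals
\begin{align*}
\mathbb E\big[g(\hat S(A))-g(S_0(A))\big]-\mathbb E\big[\hat\beta(A)(X)\Delta(A)(X)\big].
\end{align*}

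\textbf{Step 2: expand $g$ and cancel via the Riesz representer.} We apply the second-order expansion \Eqref{ass:g_smooth} at $u=S_0(A)$ with $h=\Delta(A)$. This gives $g(\hat S(A))-g(S_0(A))=\Diff g_{S_0(A)}(\Delta(A))+R$ with $|R|\lesssim\|\Delta(A)\|_{\mathcal U}^2$. The Riesz identity \Eqref{eq:riesz} with $h=\Delta(A)\in\mathcal H_a$ gives $\mathbb E[\Diff g_{S_0(A)}(\Delta(A))]=\mathbb E[\beta_0(A)(X)\Delta(A)(X)]$. Subtracting, the linear terms cancel, and the bias becomes
\begin{align*}
\mathbb E[R]-\mathbb E\big[\delta_\beta(A)(X)\,\Delta(A)(X)\big].
\end{align*}
This is the Neyman-orthogonality step: neither nuisance error appears linearly anymore.

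\textbf{Step 3: bound the remainders.} By overlap (Assumption~\ref{assumption:design}), $\|\Delta(A)\|^2_{\mathcal U}=\int \Delta(A)^2\xi_0(A)\,\diff\mathbb P(\cdot\mid A)\le C\,\mathbb E[\Delta(A)(X)^2\mid A]$. Hence $|\mathbb E[R]|\lesssim\|\Delta\|^2_{L^2(\mathbb P)}$. The cross term is bounded by Cauchy--Schwarz in $L^2(\mathbb P)$ followed by $xy\le(x^2+y^2)/2$:
\begin{align*}
\big|\mathbb E[\delta_\beta\Delta]\big|\le\|\delta_\beta\|_{L^2(\mathbb P)}\|\Delta\|_{L^2(\mathbb P)}\le\tfrac12\big(\|\delta_\beta\|^2_{L^2(\mathbb P)}+\|\Delta\|^2_{L^2(\mathbb P)}\big).
\end{align*}
Averaging over the auxiliary sample then yields the claim, interpreting the nuisance norms in expectation.

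\textbf{Main obstacle.} The delicate point is the validity of the local expansion. \Eqref{ass:g_smooth} holds only for $\hat S(A)$ in a neighborhood of $S_0(A)$ with $h$ small, whereas $L^2(\mathbb P)$ smallness of $\Delta$ does not give pointwise-in-$A$ smallness. I would first assume the uniform second-derivative bound extends along the segment between $\hat S(A)$ and $S_0(A)$, which yields a global quadratic remainder. If that is unavailable, I would split on the event $\{\|\Delta(A)\|_{\mathcal U}\le r\}$ and handle the complement with Markov's inequality together with boundedness of $g$ and of $\hat\beta$. A secondary technicality is that the $1/K$ averaging with random $K$ does not bias Step~1. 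This holds because $K$ enters only through the conditional i.i.d.\ structure given $(A,K)$, and we assume the design law of $X$ does not depend on $K$ given $A$.
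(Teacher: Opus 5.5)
Your proposal is correct and follows the paper's proof essentially step for step: you condition on the training fold, and you reduce the correction term to $-\mathbb E[\hat\beta(A)(X)\Delta(A)(X)]$ using the zero-mean noise. You then split $\hat\beta=\beta_0+(\hat\beta-\beta_0)$ so that the Riesz identity cancels the linear term of the expansion of $g$. Finally, you bound the remainder via overlap and the cross term via Cauchy--Schwarz and $2ab\le a^2+b^2$.

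Your caveats about the merely local validity of \Eqref{ass:g_smooth} and about the design law not depending on $K$ address points the paper's proof passes over silently (it applies the expansion globally). They sharpen the argument rather than change it.
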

\end{tcolorbox}

\begin{proof}
    See Supplement~\ref{appendix:proofs_onestep}.
\end{proof}

As a result, by construction, \method \emph{removes the linear bias term} and leaves only a second-order remainder. This is \textit{in contrast} to the plug-in estimator, for which the bias is generally linear in
$\mathbb{E}\!\left[\Diff g_{S_0(A)}(\Delta(A))\right]$ (cf. Proposition~\ref{prop:pluginbias}).
Consequently, root-$n$ inference for \method is possible under \emph{substantially weaker nuisance rates}. For example, it suffices that
\begin{align}
\|\hat S-S_0\|_{L^2(\mathbb P)}^2=o_p(n^{-1/2})
\quad\text{and}\quad
\|\hat\beta-\beta_0\|_{L^2(\mathbb P)}^2=o_p(n^{-1/2}),
\end{align}
which corresponds to $n^{-1/4}$-type rates in $L^2(\mathbb P)$ root-mean-squared error.

\begin{remark}[Input-specific targets via second-stage regression]
\emph{Our main focus is the population-average estimand $\theta=\mathbb E[g(S_0(A))]$. In some applications, however, the object of interest is the input-specific functional}
\begin{align}
g(S_0(a))=\mathbb E[g(S_0(A))\mid A=a].
\end{align}
\emph{In this case, our debiased pseudo-outcomes $\tilde\psi_i$ may be used in a second-stage regression on $A_i$ to estimate the conditional map $a \mapsto g(S_0(a))$. This type of second-stage regression maintains Neyman-orthogonality w.r.t. $\hat S$ and $\hat \beta$.}
\end{remark}

\begin{remark}[Asymptotic normality]
\emph{Under additional regularity conditions controlling the discrepancy between the estimated pseudo-outcomes and the oracle debiasing score, one can further establish asymptotic normality of \method. We state a formal result in Supplement~\ref{appendix:proofs_onestep}, and provide experimental results for empirical coverage in Supplement~\ref{sec:additional_results}.}
\end{remark}

\begin{remark}[PPI with neural operators]
\emph{Our framework easily integrates prediction-powered inference (PPI) \cite{angelopoulos.2023, angelopoulos.2023a} when additional unlabeled inputs are available. Unlike standard PPI, which uses only \underline{whether} a label is observed, our setting also exploits \underline{where} labels are observed through the location-dependent correction term. 
Thus, with $n_1$ partially observed and $n_2$ fully unlabeled inputs, a PPI-enhanced version of \method is}
\begin{align}\label{eq:PPI}
\hat\theta_{\mathrm{PPI}}
=
\frac{1}{n_1+n_2}\sum_{i=1}^{n_1+n_2} g\!\big(\hat S(A_i)\big)
+
\frac{1}{n_1}\sum_{i=1}^{n_1}
\frac{1}{K_i}\sum_{k=1}^{K_i}
\hat\beta(A_i)(X_{ik})\big(Y_{ik}-\hat S(A_i)(X_{ik})\big).
\end{align}
\emph{This preserves the orthogonal correction while allowing large-scale unlabeled input data to improve efficiency through the plug-in component.}
\end{remark}

\subsection{Automatic DML with \method}\label{sec:autodml}

We now turn to the practical question of how to estimate the debiasing weight $\beta_0$ from data. The key insight is that $\beta_0$ admits a structured decomposition that separates properties of the functional from the sampling design.

\begin{tcolorbox}[
  colback=Gray!8,
  colframe=Gray!75!black,
  boxrule=0.8pt,
  arc=1.5mm,
  left=1.2mm,
  right=1.2mm,
  top=1mm,
  bottom=1mm,
  fonttitle=\bfseries,
  coltitle=white,
  enhanced,
  breakable
]
\begin{proposition}[Decomposition of the debiasing weight]\label{prop:beta_decomp}
Suppose $\mathcal U=L^2(\Omega,\mu)$, and let $w_g(u)\in L^2(\Omega,\mu)$ denote the Riesz representer of $\Diff g_u$ with respect to $\mu$. Then, the unique solution $\beta_0(A)(x)$ satisfying \Eqref{eq:riesz} admits, $\mathbb P (\cdot \mid A)$-a.e., the decomposition
\begin{align}
&\beta_0(A)(x)
=
\xi_0(A)(x)\,w_g(S_0(A))(x).\label{eq:decomposition}
\end{align}
\end{proposition}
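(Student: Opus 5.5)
The plan is to verify directly that the candidate $\tilde\beta(a)(x) \coloneqq \xi_0(a)(x)\,w_g(S_0(a))(x)$ belongs to $\mathcal H_a$ and satisfies the defining identity \Eqref{eq:riesz}. Uniqueness of the Riesz representer in $\mathcal H_a$ then forces $\beta_0(a)=\tilde\beta(a)$ $\mathbb P(\cdot\mid a)$-a.e. Throughout I fix $a$ outside a $\mathbb P$-null set, chosen so that Assumption~\ref{assumption:design} holds at $a$.

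\textbf{Step 1 (change of measure).} Take any $h\in\mathcal H_a$. By Assumption~\ref{assumption:design}, $\mu\ll\mathbb P(\cdot\mid a)$ with density $\xi_0(a)=\diff\mu/\diff\mathbb P(\cdot\mid a)$. The Radon--Nikodym theorem then gives
\begin{align}
\mathbb E\big[\tilde\beta(A)(X)h(X)\mid A=a\big]
=\int_\Omega w_g(S_0(a))(x)\,h(x)\,\xi_0(a)(x)\,\diff\mathbb P(x\mid a)
=\int_\Omega w_g(S_0(a))(x)\,h(x)\,\diff\mu(x).
\end{align}
The right-hand side equals $\langle w_g(S_0(a)),h\rangle_{\mathcal U}=\Diff g_{S_0(a)}(h)$ by the definition of $w_g$. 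For this I need $h\in\mathcal U$, which follows from the continuous embedding $\mathcal H_a\hookrightarrow\mathcal U$:
\begin{align}
\int h^2\,\diff\mu=\int h^2\xi_0(a)\,\diff\mathbb P(\cdot\mid a)\le C\|h\|_a^2 .
\end{align}
Integrability of the integrand follows from Cauchy--Schwarz in $L^2(\mu)$.

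\textbf{Step 2 (membership and uniqueness).} I check that $\tilde\beta(a)\in\mathcal H_a$:
\begin{align}
\int \xi_0^2 w_g^2\,\diff\mathbb P(\cdot\mid a)\le C\int \xi_0 w_g^2\,\diff\mathbb P(\cdot\mid a)=C\|w_g(S_0(a))\|_{\mathcal U}^2<\infty .
\end{align}
This uses $\xi_0\le C$ a.e.\ and the same change of measure as in Step 1. Next, $h\mapsto \Diff g_{S_0(a)}(h)$ is a bounded linear functional on $\mathcal H_a$, because it is a bounded functional on $\mathcal U$ composed with the embedding. By the Riesz representation theorem in the Hilbert space $\mathcal H_a$, its representer is unique as an element of $\mathcal H_a$, i.e.\ up to $\mathbb P(\cdot\mid a)$-null sets. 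Since $\tilde\beta(a)$ represents this functional, it coincides with $\beta_0(a)$ $\mathbb P(\cdot\mid a)$-a.e., which is \Eqref{eq:decomposition}.

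\textbf{Main obstacle.} The computations themselves are routine. The delicate part is the measure-theoretic bookkeeping. The representer $w_g$ is defined only $\mu$-a.e., while the claimed equality is $\mathbb P(\cdot\mid a)$-a.e., and sets that are $\mu$-null need not be $\mathbb P(\cdot\mid a)$-null. I would handle this by working at the level of the product $\xi_0 w_g$: on any set $N$ with $\mu(N)=0$ we have $\xi_0=0$ $\mathbb P(\cdot\mid a)$-a.e.\ on $N$. That would contradict the stated positivity $\xi_0>0$ unless $\mathbb P(N\mid a)=0$, so under the assumption as stated the two measures are in fact equivalent and the ambiguity disappears. Otherwise the product is still well defined $\mathbb P(\cdot\mid a)$-a.e. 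Finally, measurability of $(a,x)\mapsto\tilde\beta(a)(x)$, needed to read \Eqref{eq:riesz} as a conditional expectation, I would simply take as implicit in the setup.
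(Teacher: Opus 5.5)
Your proof is correct, and it runs in the opposite logical direction from the paper's. The paper derives the formula. It starts from the Riesz identity and writes $\int_\Omega \beta_0(A)\,h\,\diff\mathbb P(\cdot\mid A)=\int_\Omega w_g(S_0(A))\,h\,\diff\mu$ for all $h\in\mathcal H_A$. From this it concludes equality of the signed measures $\beta_0(A)\,\diff\mathbb P(\cdot\mid A)=w_g(S_0(A))\,\diff\mu$, and reads off $\beta_0=\xi_0\,w_g$ by uniqueness of Radon--Nikodym densities. You instead write down the candidate $\xi_0 w_g$ and verify that it lies in $\mathcal H_a$ and satisfies the identity, using the same change of measure $\diff\mu=\xi_0\,\diff\mathbb P(\cdot\mid a)$. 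You then conclude by uniqueness of the Riesz representer in $\mathcal H_a$.

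The paper's route is shorter and does not require guessing the answer. However, it leaves two points implicit: that testing against all $h\in\mathcal H_A$ (in particular indicators) pins down a finite signed measure, and that $w_g\in L^1(\mu)$, which holds since $\mu(\Omega)=\int\xi_0\,\diff\mathbb P(\cdot\mid a)\le C$.

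Your route replaces that measure-identification step with Hilbert-space uniqueness. It also makes explicit where the bound $\xi_0\le C$ enters: in the membership $\xi_0 w_g\in\mathcal H_a$ and in the boundedness of $\Diff g_{S_0(a)}$ on $\mathcal H_a$. As a by-product, it proves existence of $\beta_0$ constructively rather than taking it as given. Your remark that positivity of $\xi_0$ makes $\mu$ and $\mathbb P(\cdot\mid a)$ mutually absolutely continuous correctly resolves the null-set ambiguity in $w_g$, which the paper's proof does not address.
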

\end{tcolorbox}
\begin{proof}
    See Supplement~\ref{appendix:proofs_autoDML}.
\end{proof}

This proposition shows that the debiasing weight $\beta_0$ factorizes into two components: (i)~a \emph{functional sensitivity term} $w_g$, which captures how the target functional reacts to local perturbations of the trajectory, and (ii)~a \emph{design correction} $\xi_0$, which accounts for irregular and input-dependent sampling. Hence, a na\"ive way to estimate $\beta_0$ would be to try to derive $w_g$ analytically, and to obtain $\xi_0$ by inverting an estimated sampling design $\mathbb P(\cdot \mid A)$. However, this requires an inversion of the estimated sampling design, which can be \emph{highly unstable} due to division close to $0$.

As a remedy, we show how to estimate the correction term $\beta_0$ in the debiasing score \emph{without} explicitly estimating the sampling distribution  $\mathbb P(\cdot\mid A)$ and performing density inversion. Further, we do not require an analytical derivation of $w_g$. Instead, our approach directly targets the Riesz representer via a \emph{primal variational characterization}, and thus extends automatic DML to operator-valued nuisances and autodiff-based moments.

Recall from \Eqref{eq:riesz} that the correction term $\beta_0$ is defined by
\begin{align}
\mathbb{E}[\beta_0(A)(X)\,h(X)\mid A=a]
=
\Diff g_{S_0(a)}(h),
\qquad \forall h\in \mathcal{H}_a.
\label{eq:riesz_beta_primal}
\end{align}

Rather than enforcing this equality over a class of test functions $h$, we exploit the variational characterization of the Riesz representer. For each $A$, we define the linear functional
$\mathcal L_A(h) \coloneqq \Diff g_{S_0(A)}(h)$.
Then, $\beta_0(A)(\cdot)$ is the unique minimizer of the quadratic functional
\begin{align}
\beta_0(A)(\cdot)
=
\arg\min_{\beta(A)(\cdot)}
\left\{
\mathbb{E}[\beta(A)(X)^2\mid A]
-
2\,\mathcal L_A\big(\beta(A)(\cdot)\big)
\right\}.
\end{align}
Averaging over $A$ yields the population risk
\begin{align}
\beta_0
=
\arg\min_{\beta}
\mathbb{E}\left[
\mathbb{E}[\beta(A)(X)^2\mid A]
-
2\,\Diff g_{S_0(A)}\big(\beta(A)(\cdot)\big)
\right].
\label{eq:primal_riesz}
\end{align}

\begin{tcolorbox}[
  colback=ForestGreen!8,
  colframe=ForestGreen!75!black,
  boxrule=0.8pt,
  arc=1.5mm,
  left=1.2mm,
  right=1.2mm,
  top=1mm,
  bottom=1mm,
  fonttitle=\bfseries,
  coltitle=white,
  enhanced,
  breakable
]
\begin{theorem}[Primal characterization of the Riesz representer]
\label{thm:primal_riesz}
The unique minimizer of \Eqref{eq:primal_riesz} is the Riesz representer $\beta_0$ satisfying
\Eqref{eq:riesz_beta_primal}.
\end{theorem}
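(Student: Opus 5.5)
The plan is to reduce the population problem in \Eqref{eq:primal_riesz} to a family of pointwise-in-$A$ quadratic problems on the Hilbert spaces $\mathcal H_a$. Then I will use the completing-the-square identity that underlies every Riesz-regression argument.

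\textbf{Step 1: the functional is bounded on $\mathcal H_a$.} For $\mathbb P$-almost every $a$, I first check that $\mathcal L_a(h)=\Diff g_{S_0(a)}(h)$ is a bounded linear functional on $\mathcal H_a$. By Fr\'echet differentiability, $\mathcal L_a$ is bounded on $\mathcal U=L^2(\Omega,\mu)$. By Assumption~\ref{assumption:design},
\begin{align}
\|h\|_{\mathcal U}^2=\int h^2\,\xi_0(a)\,\diff\mathbb P(\cdot\mid a)\le C\|h\|_a^2,
\end{align}
which is the continuous embedding noted after the assumption. Hence $\mathcal L_a$ is bounded on $\mathcal H_a$. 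The Riesz representation theorem in $\mathcal H_a$ then gives existence and uniqueness (in $\mathcal H_a$) of $\beta_0(a)$ satisfying \Eqref{eq:riesz_beta_primal}. This is consistent with Proposition~\ref{prop:beta_decomp}.

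\textbf{Step 2: complete the square.} For any admissible $\beta$, meaning $\beta(a)\in\mathcal H_a$ a.e.\ and jointly measurable with finite $L^2(\mathbb P)$ norm, I apply \Eqref{eq:riesz_beta_primal} with $h=\beta(A)$. This gives
\begin{align}
\mathbb E[\beta(A)(X)^2\mid A]-2\mathcal L_A(\beta(A))
=\mathbb E[(\beta(A)(X)-\beta_0(A)(X))^2\mid A]-\mathbb E[\beta_0(A)(X)^2\mid A].
\end{align}
Taking expectations over $A$, the population risk equals $\|\beta-\beta_0\|_{L^2(\mathbb P)}^2-\|\beta_0\|_{L^2(\mathbb P)}^2$. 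The second term does not depend on $\beta$, so $\beta_0$ is a minimizer. Any other minimizer must satisfy $\|\beta-\beta_0\|_{L^2(\mathbb P)}=0$, i.e.\ $\beta=\beta_0$ $\mathbb P$-a.e.\ on $\mathcal A\times\Omega$. This is exactly uniqueness in the sense relevant to the statement.

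\textbf{Main obstacle: integrability and measurability.} The delicate step is making the averaging over $A$ legitimate. The risk of $\beta_0$ must be finite, so I need $\mathbb E\|\beta_0(A)\|_A^2<\infty$, and $a\mapsto\beta_0(a)$ must be measurable. For the norm bound, note that $\|\beta_0(a)\|_a$ equals the operator norm of $\mathcal L_a$ on $\mathcal H_a$. By Step~1 this is at most $\sqrt C\,\|w_g(S_0(a))\|_{\mathcal U}$. I would assume, as is implicit in the paper's regularity (bounded derivative in the neighbourhood of $\{S_0(A)\}$), that $\mathbb E\|w_g(S_0(A))\|_{\mathcal U}^2<\infty$. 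Measurability I would handle through the explicit form in Proposition~\ref{prop:beta_decomp}, $\beta_0=\xi_0\,w_g(S_0)$.

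\textbf{Uniqueness from the pointwise view.} I would also remark that minimizing the integrand pointwise in $a$ is equivalent to minimizing the expectation. The integrand is bounded below by $-\|\beta_0(a)\|_a^2$ and attains this bound only at $\beta_0(a)$, so a minimizer of the average must minimize the integrand for a.e.\ $a$. This gives the uniqueness claim directly from the pointwise problems.
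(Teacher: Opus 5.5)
Your proposal is correct and follows essentially the paper's route: continuity of $\mathcal L_a$ on $\mathcal H_a$ via the overlap embedding, the Riesz representation theorem, and the completing-the-square identity. Your integrated form $\mathcal R(\beta)=\|\beta-\beta_0\|_{L^2(\mathbb P)}^2-\|\beta_0\|_{L^2(\mathbb P)}^2$ gives uniqueness a bit more directly than the paper's appeal to separability across $A$, and your condition $\mathbb E\|\beta_0(A)\|_A^2<\infty$ is exactly what that separability step silently needs. Note, however, that this condition is an added hypothesis, not a consequence of the paper's assumption of a uniformly bounded \emph{second} derivative; for example, for $g(u)=\tfrac12\|u\|_{\mathcal U}^2$ it amounts to a moment condition on $\|S_0(A)\|_{\mathcal U}$.
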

\end{tcolorbox}
\begin{proof}
    See Supplement~\ref{appendix:proofs_autoDML}.
\end{proof}

The objective of \Eqref{eq:primal_riesz} directly targets the correction weight
$\beta_0$ required for bias cancellation, without introducing auxiliary test functions $h$ or attempting to estimate the sampling density. In particular, the Riesz identity is enforced implicitly through the optimality conditions of \Eqref{eq:primal_riesz}, which avoids the need for numerically unstable density inversion.

\begin{remark}[Structured parameterization when $w_g$ is known]
\emph{In some cases, the decomposition $\beta_0(A)(x)=\xi_0(A)(x)\,w_g(S_0(A))(x)$ from \Eqref{eq:decomposition} in Proposition~\ref{prop:beta_decomp} admits a closed-form expression for the Riesz representer $w_g(u)$ of the functional derivative $\Diff g_u$ (see Supplement~\ref{sec:riesz_examples}). 
In this case, one may restrict the parameterization of the debiasing weight to}
\begin{align}
\hat\beta(A)(x)=\hat\xi(A)(x)\,w_g(\hat S(A))(x),
\end{align}
\emph{which reduces the complexity of the estimation problem, and can lead to improved numerical stability and variance reduction.} 
\end{remark}

$\bullet$~\textbf{Implementation details:} We provide a \textbf{pseudo-code} in Algorithm~\ref{alg:algorithm} in Supplement~\ref{sec:algorithm}, and we explain how to efficiently compute the empirical analogue of \Eqref{eq:primal_riesz} via \emph{automatic differentiation} in Supplement~\ref{sec:automatic_differentiation}. Finally, we provide \emph{details on the neural operator backbones} in Supplement~\ref{sec:implementation_details}.

\section{Experiments}\label{sec:experiments}


The purpose of our experiments is three-fold: \circledblue{1}~We demonstrate \textit{benefits over simple plug-in estimation}, which is the only existing baseline for estimating functionals with neural operators. \circledblue{2}~We validate the benefits of debiasing, namely, that it offers \textit{robustness to nuisance errors} as derived in Section~\ref{sec:onestep}. \circledblue{3}~We show that \method benefits from large-scale unlabeled data through its \textit{PPI extension}. 


$\bullet$~\textbf{\underline{Implementation:}}  \method is model agnostic and can be instantiated with \textit{any} neural operator architecture. Hence, we thus use a \emph{state-of-the-art backbone}, the Fourier neural operator \cite{Li.2021NO}, in our main results.  We provide \textbf{additional experiments} with DeepONet \cite{Lu.2021NO} in Supplement~\ref{sec:additional_results}. We evaluate \method against the plug-in estimator, which is the \textbf{\underline{only}} existing baseline for our setting. For fairness reasons, and to isolate the gains from our debiasing, we use the same neural operator backbones for the plug-in baseline and \method. Full implementation details are in Supplement~\ref{sec:implementation_details}. All results are averaged over $50$ runs.

$\bullet$~\textbf{\underline{Pharmacokinetics dataset:}} The pharmacokinetics dataset models the drug concentration under a one-compartment model \cite{Kwon.2002PK, Upton.2016PK}. Details are in Supplement~\ref{sec:pk_dgp}.

\begin{table*}[h]
\centering
\setlength{\tabcolsep}{2.5pt} 
\small
\resizebox{\textwidth}{!}{%
\begin{tabular}{@{}lcccccccccccccccccc@{}}
\toprule
& \multicolumn{2}{c}{$\rho=0$}
& \multicolumn{2}{c}{$\rho=0.125$}
& \multicolumn{2}{c}{$\rho=0.25$}
& \multicolumn{2}{c}{$\rho=0.375$}
& \multicolumn{2}{c}{$\rho=0.5$}
& \multicolumn{2}{c}{$\rho=0.625$}
& \multicolumn{2}{c}{$\rho=0.75$}
& \multicolumn{2}{c}{$\rho=0.875$}
& \multicolumn{2}{c}{$\rho=1$} \\
\cmidrule(lr){2-3}\cmidrule(lr){4-5}\cmidrule(lr){6-7}\cmidrule(lr){8-9}
\cmidrule(lr){10-11}\cmidrule(lr){12-13}\cmidrule(lr){14-15}\cmidrule(lr){16-17}\cmidrule(lr){18-19}

& AUC & TAT
& AUC & TAT
& AUC & TAT
& AUC & TAT
& AUC & TAT
& AUC & TAT
& AUC & TAT
& AUC & TAT
& AUC & TAT \\
\midrule
\method: structured $w_g$ (ours)
& $1.43\pm0.11$ & $1.36\pm0.11$
& $1.31\pm0.10$ & $1.30\pm0.11$
& $1.13\pm0.10$ & $1.38\pm0.12$
& $1.50\pm0.12$ & $1.35\pm0.11$
& $1.41\pm0.12$ & $1.44\pm0.12$
& $1.28\pm0.10$ & $1.15\pm0.10$
& $1.48\pm0.13$ & $1.52\pm0.13$
& $1.24\pm0.12$ & $1.15\pm0.11$
& $1.32\pm0.13$ & $1.31\pm0.11$ \\
\method: oracle $\beta_0$ (ours)
& $1.43\pm0.11$ & $1.22\pm0.10$
& $1.31\pm0.10$ & $1.24\pm0.10$
& $1.12\pm0.10$ & $1.09\pm0.09$
& $1.48\pm0.12$ & $1.31\pm0.11$
& $1.40\pm0.12$ & $1.15\pm0.10$
& $1.24\pm0.10$ & $1.17\pm0.10$
& $1.52\pm0.13$ & $1.35\pm0.12$
& $1.24\pm0.12$ & $1.18\pm0.11$
& $1.36\pm0.13$ & $1.18\pm0.11$ \\
\midrule
Plug-in
& $1.73\pm0.14$ & $1.89\pm0.15$
& $1.86\pm0.17$ & $2.10\pm0.18$
& $1.96\pm0.16$ & $2.14\pm0.18$
& $2.33\pm0.20$ & $2.30\pm0.18$
& $2.01\pm0.16$ & $2.25\pm0.17$
& $1.82\pm0.16$ & $1.92\pm0.16$
& $2.00\pm0.17$ & $2.20\pm0.17$
& $1.77\pm0.16$ & $1.90\pm0.17$
& $2.05\pm0.16$ & $2.32\pm0.17$ \\
\method (\textbf{ours})
& $1.44\pm0.11$ & $1.67\pm0.14$
& $1.31\pm0.10$ & $1.78\pm0.16$
& $1.12\pm0.10$ & $1.78\pm0.13$
& $1.50\pm0.12$ & $1.74\pm0.14$
& $1.41\pm0.12$ & $1.95\pm0.16$
& $1.29\pm0.10$ & $1.62\pm0.13$
& $1.48\pm0.13$ & $1.97\pm0.16$
& $1.25\pm0.12$ & $1.64\pm0.13$
& $1.31\pm0.13$ & $1.91\pm0.14$ \\
\quad Rel. improvement
& $\greentext{17.25 \%}$ & $\greentext{11.97 \%}$
& $\greentext{29.68 \%}$ & $\greentext{15.31 \%}$
& $\greentext{43.00 \%}$ & $\greentext{16.70 \%}$
& $\greentext{35.51 \%}$ & $\greentext{24.43 \%}$
& $\greentext{29.94 \%}$ & $\greentext{13.46 \%}$
& $\greentext{29.06 \%}$ & $\greentext{15.40 \%}$
& $\greentext{26.18 \%}$ & $\greentext{10.24 \%}$
& $\greentext{29.42 \%}$ & $\greentext{13.43 \%}$
& $\greentext{36.44 \%}$ & $\greentext{17.92 \%}$ \\
\bottomrule
\end{tabular}%
}
\caption{\textbf{Comparison: plug-in estimator vs. \method.} RMSE $\pm$ SE ($\times 100$) across different functionals (area under the concentration curve [AUC] and time-above-threshold [TAT]) for different overlap levels $\rho$. $\Rightarrow$ \emph{\method outperforms the plug-in estimator.}}
\label{tab:rmse_auc_smooth_tat}
\end{table*}

\textbf{\circledblue{1} Main results (\method vs. plug-in):} 
We analyze $18$ settings of different complexity, where we vary (i)~the \textit{target functional}: area under the concentration curve (\textbf{AUC}) and time-above-threshold (\textbf{TAT}) as a more complex functional; (ii)~the \textit{sampling irregularity} of the output trajectory for both estimation tasks. For the latter, we vary $\rho$, where $\rho=0$ corresponds to uniform sampling, and larger values of $\rho$ increasingly concentrate observations near the subject-specific peak region (which leads to more limited overlap).  We further report a structured parameterization of \method with known $w_g$ to show the additional benefits when a closed-form version of the Riesz representer is available. Finally, we report an ``oracle'' \method where we use the oracle $\beta_0$ to understand the ``upper-bound'' on the performance.

\underline{Results:}  \method outperforms the plug-in baseline by a clear margin (see Table~\ref{tab:rmse_auc_smooth_tat}). The \method variant with structured $w_g$ yields further improvements, which shows that, for known $w_g$, the structured parametrization simplifies the estimation task notably.

\begin{wrapfigure}{r}{0.4\textwidth} 
  \begin{minipage}{\linewidth}
    \centering
    \vspace{-0.9cm}
    \includegraphics[width=\textwidth, trim=0.0cm 0.0cm 0cm 0.0cm, clip]{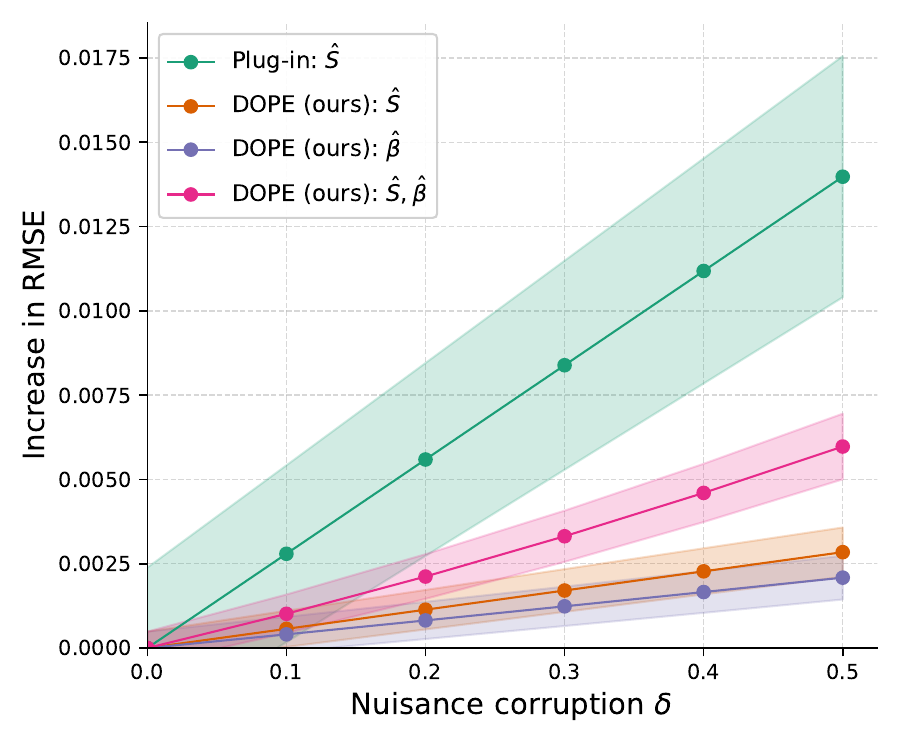}
    \vspace{-0.7cm}
    \caption{\textbf{Robustness to nuisance estimation errors:} Reported is the \emph{increase in RMSE} as we artificially increase estimation errors of the nuisance functions $\hat S$ and $\hat \beta$. $\Rightarrow$ \emph{Due to debiasing, RMSE for DOPE increases more slowly even when both nuisances are corrupted simultaneously.}}\label{fig:nuisance_corruption}
    \vspace{-.8cm}
  \end{minipage}
\end{wrapfigure}

\textbf{\circledblue{2}~Robustness to nuisance errors:} To show the benefits of debiasing, we analyze the robustness of our \method against errors in the nuisance functions. For this, we upscale the estimation errors for $S_0$ and $\beta_0$, and report the increase in RMSE for the target functional. Specifically, we artificially increase the bias in the estimated neural operators by a factor $(1+\delta)$ for $\delta \in [0,0.5]$. 

\underline{Results:} Figure~\ref{fig:nuisance_corruption} shows that, for increasing nuisance errors, the RMSE of \method increases much more slowly compared to the plug-in estimator. Of note, this holds even when estimation errors are increased in both nuisances (i.e., in both $\hat S$, $\hat \beta$). This behavior confirms our theory: due to debiasing of \method, errors in $\hat S$ and $\hat \beta$ propagate only at lower order to the estimated functional $\hat \theta$.

\textbf{\circledblue{3}~PPI extension:} We report performance of a PPI-style variant of \method as in \Eqref{eq:PPI}, where we increase the number of unlabeled trajectories during inference. We again analyze the performance of \method against the plug-in estimator. Specifically, during inference, we increase the number $n_2$ of input trajectories without any observed point evaluations. Given that \method outperforms on AUC and TAT in \textbf{(1)--(2)}, we now use the soft continuous maximum (soft cmax) functional (see Supplement~\ref{sec:riesz_examples}). Results on AUC are in Supplement~\ref{sec:additional_results}.

\underline{Results:} Table~\ref{tab:ppi} shows that \method clearly benefits from additional, unlabeled inputs during inference time, and the gain is larger for \method than for the plug-in estimator. This is highly relevant for many scientific scenarios, where generating inputs is cheap, but measuring point evaluations is expensive.

\begin{table*}[h]
\vspace{-0.3cm}
\centering
\small
\resizebox{\textwidth}{!}{%
\begin{tabular}{lcccccccc}
\toprule
 & $n_2=0$ & $n_2=64$ & $n_2=128$ & $n_2=256$ & $n_2=512$ & $n_2=1024$ & $n_2=2048$ & $n_2=4096$ \\
\midrule
 
Plug-in
& $6.15 \pm 0.55$
& $5.60 \pm 0.50$
& $5.54 \pm 0.48$
& $5.45 \pm 0.48$
& $5.32 \pm 0.47$
& $5.24 \pm 0.45$
& $5.17 \pm 0.45$
& $5.11 \pm 0.44$ \\
\quad Rel. improvement
& $0.00\%$
& $8.81\%$
& $9.90\%$
& $11.36\%$
& $13.49\%$
& $14.67\%$
& $15.85\%$
& $16.87\%$ \\
 
\midrule
\method
& $5.14 \pm 0.47$
& $4.29 \pm 0.36$
& $4.24 \pm 0.34$
& $3.95 \pm 0.30$
& $3.77 \pm 0.30$
& $3.74 \pm 0.30$
& $3.69 \pm 0.30$
& $3.62 \pm 0.30$ \\
\quad Rel. improvement
& $0.00\%$
& \greentext{$16.56\%$}
& \greentext{$17.47\%$}
& \greentext{$23.23\%$}
& \greentext{$26.76\%$}
& \greentext{$27.38\%$}
& \greentext{$28.28\%$}
& \greentext{$29.70\%$}\\
\bottomrule
\end{tabular}%
}
\vspace{-0.3cm}
\caption{\textbf{PPI results.} RMSE $\pm$ SE ($\times 100$), where compare the standard estimates against PPI-style estimates with additional unlabeled samples of size $n_2$. $\Rightarrow$ \emph{DOPE benefits from access to large unlabeled samples.}}
\label{tab:ppi}
\vspace{-0.3cm}
\end{table*}

$\bullet$~\textbf{\underline{Darcy flow dataset:}} We consider a two-dimensional Darcy-flow benchmark dataset, which is standard for evaluating neural operator outputs \cite{Kovachki.2021NO, Kovachki.2023NO}. Details are in Supplement~\ref{sec:darcy_dgp}. We evaluate a smooth excess-above-threshold functional of the Darcy solution field, which measures the spatial extent and intensity of regions where the pressure or state field exceeds a fixed level $c$. For this, we vary the smoothness parameter $\kappa$ in the functional.

\begin{wraptable}{r}{0.6\textwidth} 
\vspace{-0.5cm}              
\setlength{\intextsep}{0pt}          
\setlength{\columnsep}{1em}          
\centering
\begin{adjustbox}{width=\linewidth}
\centering
\small
\begin{tabular}{lcccccc}
\toprule
 & $\kappa=0.0$ & $\kappa=0.2$ & $\kappa=0.4$ & $\kappa=0.6$ & $\kappa=0.8$ & $\kappa=1.0$ \\
\midrule
 
Plug-in
& $1.45 \pm 0.06$
& $1.17 \pm 0.05$
& $0.94 \pm 0.04$
& $0.75 \pm 0.03$
& $0.60 \pm 0.03$
& $0.48 \pm 0.02$ \\
\method
& $1.16 \pm 0.11$
& $0.78 \pm 0.09$
& $0.56 \pm 0.06$
& $0.46 \pm 0.04$
& $0.43 \pm 0.04$
& $0.41 \pm 0.03$ \\
\quad Rel. improvement
& \greentext{$19.74\%$}
& \greentext{$33.32\%$}
& \greentext{$40.48\%$}
& \greentext{$38.75\%$}
& \greentext{$28.76\%$}
& \greentext{$15.53\%$}\\
\bottomrule
\end{tabular}%
\end{adjustbox}
\vspace{-0.3cm}
\caption{\textbf{Darcy flow results.} RMSE $\pm$ SE ($\times 10^4$) for plug-in and \method across different values of functional smoothness $\kappa$. $\Rightarrow$ \emph{\method improves over the plug-in estimator across all smoothness levels.}}
\label{tab:darcy_kappa_rmse}
\vspace{-0.5cm}
\end{wraptable}

\underline{Results:} Table~\ref{tab:darcy_kappa_rmse} reports the RMSE as we vary the smoothness parameter $\kappa$ in the target functional. Thereby, we make estimating the target functional more challenging. Due to debiasing, \method outperforms the plug-in estimator for all smoothness levels $\kappa$.

$\bullet$~\textbf{\underline{Limitations:}} Our functionals of interest exclude non-smooth targets such as hard maxima or indicator-type functionals. However, such smoothness restrictions are standard in debiased machine learning and are typically required for orthogonal bias expansions.

$\bullet$~\textbf{\underline{Conclusion:}} 
In this work, we propose \method, a debiased estimator for averaged statistical functionals of neural operator outputs under partial and input-dependent observations. Our approach extends automated DML from function-valued to operator-valued nuisance estimation, and thereby enables bias correction and valid inference for nonlinear functionals of learned solution operators.

\clearpage
\bibliography{bibliography}


\newpage
\appendix

\section{Extended related work}\label{sec:extended_related_work}
Neural operators provide a framework for learning mappings between function spaces \cite{Kovachki.2023NO}. This allows one to learn solution operators for PDEs that map input functions (e.g., coefficients or boundary conditions) to solution trajectories. Hence, neural operators are widely used to solve scientific problems in scientific computing \cite{Azizzadenesheli.2024NO, Cao.2024NO, Faroughi.2024NO}, climate science \cite{Kurth.2023NO, Peng.2024NO, Wen.2023NO}, and engineering \cite{Faroughi.2024NO, Kobayashi.2024NO, Shukla.2024NO, Wang.2025NO}. They enjoy several favorable properties, such as discretization invariance, the ability to evaluate functions at arbitrary locations, and universal approximation properties \cite{Kovachki.2021NO, Kovachki.2023NO}. 

Several architectures for neural networks have been proposed \cite[e.g.,][]{Bonev.2023, Cao.2024NO, Fanaskov.2023NO, Lee.2026NO, Lu.2021NO, Rahman2023NO, Raonic.2023NO, Zhao.2025NO}. Two state-of-the-art examples are the Fourier neural operator \cite{Li.2021NO} and DeepONet \cite{Lu.2021NO}, which we use both to demonstrate our framework. Importantly, our \method framework is agnostic to the underlying architecture and can be instantiated with \emph{arbitrary} neural operator backbones.

A common downstream use of neural operators is inference on low-dimensional functionals of the predicted trajectory rather than recovery of the full field itself (see Figure~\ref{fig:functional_examples}). $\Rightarrow$ \emph{However, the only generally available strategy is na{\"i}ve plug-in evaluation of the functional on the predicted trajectory, which suffers from \textbf{plug-in bias} under operator approximation error.}

\clearpage

\section{Proofs}\label{sec:proofs}

\subsection{Plug-in bias}\label{appendix:proofs_plugin}

\begin{proposition}[First-order bias of plug-in estimators]\label{prop:pluginbias_appendix}
Let $\Delta(\cdot)=\hat S(\cdot)-S_0(\cdot)$.
Then
\begin{align}
\big| \mathbb{E}[g(\hat S(A))]-\theta \big|
\;\lesssim\;
\big| \mathbb{E}\!\left[\Diff g_{S_0(A)}(\Delta(A))\right] \big|
\;+\;
\|\Delta\|_{L^2(\mathbb P)}^2,
\end{align}
i.e., the neural operator error propagates linearly through the Fr\'echet differential and induces first-order bias.
\end{proposition}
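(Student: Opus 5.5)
The plan is to apply the second-order Taylor expansion of $g$ from \Eqref{ass:g_smooth} pointwise in $A$, and then take expectations. For each realization of $A$, we set $u=S_0(A)$ and $h=\Delta(A)=\hat S(A)-S_0(A)$. This gives
\begin{align}
g(\hat S(A)) - g(S_0(A)) = \Diff g_{S_0(A)}(\Delta(A)) + R(S_0(A),\Delta(A)),
\end{align}
with $|R(S_0(A),\Delta(A))|\lesssim \|\Delta(A)\|_{\mathcal U}^2$. Taking expectations and using $\theta=\mathbb E[g(S_0(A))]$, the triangle inequality yields
\begin{align}
\big|\mathbb E[g(\hat S(A))]-\theta\big| \le \big|\mathbb E[\Diff g_{S_0(A)}(\Delta(A))]\big| + C\,\mathbb E\big[\|\Delta(A)\|_{L^2(\mu)}^2\big].
\end{align}
Integrability of each term is assumed implicitly; it follows, for instance, from boundedness of the Riesz representer $w_g$ on the relevant neighborhood together with $\Delta\in L^2$.

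The remaining step is to convert $\mathbb E\|\Delta(A)\|_{L^2(\mu)}^2$ into the joint norm $\|\Delta\|_{L^2(\mathbb P)}^2=\mathbb E\big[\int_\Omega \Delta(A)(x)^2\,\diff\mathbb P(x\mid A)\big]$. This uses Assumption~\ref{assumption:design}. Since $\diff\mu=\xi_0(A)\,\diff\mathbb P(\cdot\mid A)$ with $\xi_0\le C$, we get
\begin{align}
\int_\Omega \Delta(A)(x)^2\,\diff\mu(x)=\int_\Omega \Delta(A)(x)^2\,\xi_0(A)(x)\,\diff\mathbb P(x\mid A)\le C\,\|\Delta(A)\|_{\mathcal H_A}^2 .
\end{align}
This is precisely the continuous embedding $\mathcal H_a\hookrightarrow\mathcal U$ noted earlier. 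Averaging over $A$ then gives $\mathbb E\|\Delta(A)\|_{\mathcal U}^2\lesssim\|\Delta\|_{L^2(\mathbb P)}^2$, which completes the bound.

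The main obstacle is the locality of the Taylor bound. The remainder estimate holds only for $u$ in a neighborhood of $\{S_0(A)\}$ and for sufficiently small $h$, so it does not apply automatically when $\Delta(A)$ is large. I would handle this in one of two ways:
\begin{itemize}
\item \emph{Option 1:} assume, as is implicit in the statement, that $\hat S(A)$ lies almost surely in that neighborhood (e.g.\ $\|\Delta(A)\|_{\mathcal U}\le r$ a.s.).
\item \emph{Option 2:} split on the event $\{\|\Delta(A)\|_{\mathcal U}>r\}$. If $g$ is globally Lipschitz or has bounded range there, then on this event $|g(\hat S)-g(S_0)-\Diff g(\Delta)|\lesssim \|\Delta\|_{\mathcal U}\lesssim \|\Delta\|^2_{\mathcal U}/r$, so the same quadratic bound holds up to constants.
\end{itemize}
I would state Option 1 explicitly as a standing convention. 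Everything else is routine.
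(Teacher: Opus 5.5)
Your proposal is correct and follows the paper's proof essentially step for step. Both use the pointwise second-order expansion with $h=\Delta(A)$, then take expectations and apply the triangle (and Jensen) inequality, then use the overlap bound $\int_\Omega \Delta(A)(x)^2\,\diff\mu(x)\le C\,\mathbb E[\Delta(A)(X)^2\mid A]$ averaged over $A$. Your remark on the locality of the remainder bound is a real gap that the paper's proof skips over: it applies the expansion with $h=\Delta(A)$ without checking that $\hat S(A)$ stays in the neighborhood where the expansion holds. Stating your Option~1 as an explicit standing assumption is therefore the right way to make the argument rigorous.
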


\begin{proof}
We apply the second-order expansion in \Eqref{ass:g_smooth} with $h=\Delta(A)$:
\begin{align}
g(\hat S(A))
=
g(S_0(A)+\Delta(A))
=
g(S_0(A))+\Diff g_{S_0(A)}(\Delta(A))+R(S_0(A),\Delta(A)).
\end{align}
Taking expectations and using $\theta=\mathbb E[g(S_0(A))]$ yields
\begin{align}
\mathbb{E}[g(\hat S(A))]-\theta
=
\mathbb{E}\!\left[\Diff g_{S_0(A)}(\Delta(A))\right]
+
\mathbb{E}[R(S_0(A),\Delta(A))].
\label{eq:plugin_decomp}
\end{align}

\medskip
In the Hilbert space $\mathcal U=L^2(\Omega,\mu)$, the remainder bound in \Eqref{ass:g_smooth} gives
\begin{align}
|R(S_0(A),\Delta(A))|\lesssim \|\Delta(A)\|_{L^2(\mu)}^2,
\end{align}
which depends on $A$ only.
In order to express this bound in the sampling geometry, we use Assumption~\ref{assumption:design}.
That is, since $\mu\ll \mathbb P(\cdot\mid A)$ with Radon--Nikodym derivative
$\xi_0(A)(\cdot)=\frac{\diff\mu}{\diff\mathbb P(\cdot\mid A)}\le C$, we have, for each realization of $A$,
\begin{align}
\|\Delta(A)\|_{L^2(\mu)}^2
&=
\int_\Omega \Delta(A)(x)^2\,\diff\mu(x)
=
\int_\Omega \Delta(A)(x)^2\,\xi_0(A)(x)\,\diff\mathbb P(x\mid A)\\
&\le
C\int_\Omega \Delta(A)(x)^2\,\diff\mathbb P(x\mid A) 
= C
\mathbb E[\Delta(A)(X)^2\mid A].
\label{eq:cond_int_as_exp}
\end{align}
Taking expectation over $A$ and using the definition of the joint law
$\mathbb P(\diff a,\diff x)=\mathbb P(\diff a)\,\mathbb P(\diff x\mid a)$ yields
\begin{align}
\mathbb E\|\Delta(A)\|_{L^2(\mu)}^2
\le
C\,\mathbb E[\Delta(A)(X)^2]
=
C\,\|\Delta\|_{L^2(\mathbb P)}^2,
\qquad
\|\Delta\|_{L^2(\mathbb P)}^2\coloneqq\mathbb E[\Delta(A)(X)^2].
\label{eq:mu_to_jointQ}
\end{align}
Therefore, applying Jensen's inequality and \Eqref{eq:mu_to_jointQ}, yields
\begin{align}
\big|\mathbb{E}[R(S_0(A),\Delta(A))]\big|
\le
\mathbb E\big[|R(S_0(A),\Delta(A))|\big]
\lesssim
\mathbb E\|\Delta(A)\|_{L^2(\mu)}^2
\lesssim
\|\Delta\|_{L^2(\mathbb P)}^2.
\label{eq:plugin_remainder_bound}
\end{align}

Finally, combining \Eqref{eq:plugin_decomp} and \Eqref{eq:plugin_remainder_bound} and applying the triangle inequality gives
\begin{align}
\big| \mathbb{E}[g(\hat S(A))]-\theta \big|
\;\lesssim\;
\big| \mathbb{E}\!\left[\Diff g_{S_0(A)}(\Delta(A))\right] \big|
+
\|\Delta\|_{L^2(\mathbb P)}^2.
\end{align}

Unless the linear term vanishes (which it need not for nonlinear $g$), $\sqrt{n}$-inference requires this bias to be $o(n^{-1/2})$, i.e.,
\begin{align}
\mathbb{E}\!\left[\Diff g_{S_0(A)}(\Delta(A))\right]=o(n^{-1/2}),
\end{align}
which is a first-order (linear) accuracy requirement on the operator error $\Delta(A)$.

\end{proof}

\clearpage


\clearpage

\subsection{One-step estimator}\label{appendix:proofs_onestep}

\begin{lemma}[Debiasing identity for the score]
\label{lem:score_derivation}
Let $\Delta$ be any measurable perturbation such that $\Delta(A,\cdot)\in\mathcal U$
almost surely and
\begin{align}
\mathbb E[\Delta(A)(X)^2] < \infty.
\end{align}
If $\beta_0\in L^2(\mathbb P)$ satisfies
\begin{align}
\mathbb E[\beta_0(A)(X)h(X)\mid A]
=
\Diff g_{S_0(A)}(h),
\qquad
\forall h\in\mathcal{ H}_A,
\label{eq:appendix_riesz_score}
\end{align}
then
\begin{align}
\mathbb E\!\left[
\frac{1}{K}\sum_{k=1}^K \beta_0(A)(X_k)\Delta(A)(X_k)
\right]
=
\mathbb E\!\left[\Diff g_{S_0(A)}(\Delta(A))\right].
\label{eq:debiasing_identity_appendix}
\end{align}
Moreover, if $\hat S$ is any estimator independent of the evaluation fold
(for instance, by cross-fitting), and $\Delta(A)=\hat S(A)-S_0(A)$, then
\begin{align}
\mathbb E\!\left[
\frac{1}{K}\sum_{k=1}^K
\beta_0(A)(X_k)\big(Y_k-\hat S(A)(X_k)\big)
\,\middle|\,
\hat S
\right]
=
-
\mathbb E\!\left[
\Diff g_{S_0(A)}(\Delta(A))
\,\middle|\,
\hat S
\right].
\label{eq:residual_debiasing_identity_appendix}
\end{align}
Hence, the weighted residual term in \Eqref{eq:score} cancels the linear plug-in
bias.
\end{lemma}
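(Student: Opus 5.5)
We prove the first identity by conditioning on $(A,K)$ and then applying the tower property. By the sampling model, given $(A,K)$ the locations $X_1,\dots,X_K$ are conditionally i.i.d.\ with law $\mathbb P(\cdot\mid A)$; in particular the law of each $X_k$ given $(A,K)$ does not depend on $K$ or on $k$. Hence, for each $k\le K$,
\begin{align}
\mathbb E\big[\beta_0(A)(X_k)\Delta(A)(X_k)\,\big|\,A,K\big]
=
\int_\Omega \beta_0(A)(x)\Delta(A)(x)\,\diff\mathbb P(x\mid A)
=
\mathbb E\big[\beta_0(A)(X)\Delta(A)(X)\,\big|\,A\big].
\end{align}
Averaging over $k=1,\dots,K$ cancels the factor $1/K$ and leaves the same quantity. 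We then need $h=\Delta(A)\in\mathcal H_A$ for $\mathbb P$-a.e.\ $A$. This holds because $\mathbb E[\mathbb E[\Delta(A)(X)^2\mid A]]=\mathbb E[\Delta(A)(X)^2]<\infty$, so the conditional second moment is finite almost surely. The Riesz identity \Eqref{eq:appendix_riesz_score} then turns the conditional expectation into $\Diff g_{S_0(A)}(\Delta(A))$, and taking the outer expectation over $A$ gives \Eqref{eq:debiasing_identity_appendix}.

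Integrability has to be checked so that conditioning and Fubini are justified. By Cauchy--Schwarz, $\mathbb E|\beta_0(A)(X)\Delta(A)(X)|\le\|\beta_0\|_{L^2(\mathbb P)}\|\Delta\|_{L^2(\mathbb P)}<\infty$. Because the conditional law of $X_k$ given $(A,K)$ equals that of $X$ given $A$, each summand, and hence the average $\frac1K\sum_{k=1}^K$, is integrable with the same bound.

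For the second identity we condition on $\hat S$. Cross-fitting makes $\hat S$ independent of the evaluation observation $O$, so conditionally on $\hat S$ the function $\Delta=\hat S-S_0$ is fixed and deterministic while $O$ keeps its population law. We decompose the residual as
\begin{align}
Y_k-\hat S(A)(X_k)=\varepsilon_k-\Delta(A)(X_k).
\end{align}
The noise term vanishes in expectation:
\begin{align}
\mathbb E\big[\beta_0(A)(X_k)\varepsilon_k\,\big|\,A,X_k,\hat S\big]=\beta_0(A)(X_k)\,\mathbb E[\varepsilon_k\mid A,X_k]=0.
\end{align}
This uses $\mathbb E[\varepsilon_k\mid A,X_k]=0$ together with independence from $\hat S$. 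It also uses that conditioning on $K$ does not change the noise mean, which holds under the natural reading of the model that, given $(A,K)$, the pairs are i.i.d.\ copies of $(X,Y)$. Integrability of this term follows from Cauchy--Schwarz, the finite conditional noise variance and $\beta_0\in L^2(\mathbb P)$. The remaining term $-\Delta(A)(X_k)$ is handled by the first part applied conditionally on $\hat S$, with $\mathbb E[\Delta(A)(X)^2\mid\hat S]<\infty$ assumed, as is implicit. This yields \Eqref{eq:residual_debiasing_identity_appendix}.

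The main obstacle is bookkeeping rather than analysis. The random, $A$-dependent $K$ must not bias the average, which is resolved by conditioning on $(A,K)$ first. The conditional-on-$\hat S$ manipulations must rest on cross-fitting independence. Finally, $\Delta(A)$ must lie in $\mathcal H_A$ almost surely so that the Riesz identity applies. The closing statement, that the residual term cancels the linear plug-in bias, then follows by comparing with \Eqref{eq:plugin_decomp}.
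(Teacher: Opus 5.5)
Your proposal is correct and follows the paper's proof essentially step for step. For the first claim it conditions on $(A,K)$ and uses the conditional i.i.d.\ design together with the Riesz identity at $h=\Delta(A)$. For the second it splits $Y_k-\hat S(A)(X_k)=\varepsilon_k-\Delta(A)(X_k)$, kills the noise term by iterated expectations, and applies the first part conditionally on $\hat S$. You add bookkeeping the paper leaves implicit, such as $\Delta(A)\in\mathcal H_A$ a.s.\ and the handling of the random $K$.

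One small caveat: integrability of $\beta_0(A)(X_k)\varepsilon_k$ does not follow from $\beta_0\in L^2(\mathbb P)$ plus only pointwise-finite conditional noise variance. You need something like $\mathbb E[\varepsilon^2]<\infty$ or a bounded conditional variance, although the paper tacitly assumes this as well.
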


\begin{proof}
For the first claim, conditional on $(A,K)$ the pairs
$\{(X_k,Y_k)\}_{k=1}^K$ are i.i.d., so applying exchangeability  yields
\begin{align}
\mathbb E\!\left[
\frac{1}{K}\sum_{k=1}^K
\beta_0(A)(X_k)\Delta(A)(X_k)
\,\middle|\,
A,K
\right]
=
\mathbb E[\beta_0(A)(X)\Delta(A)(X)\mid A].
\end{align}
Taking expectations and applying the Riesz identity from \Eqref{eq:appendix_riesz_score} with $h=\Delta(A)$ yields
\begin{align}
\mathbb E\!\left[
\frac{1}{K}\sum_{k=1}^K \beta_0(A)(X_k)\Delta(A)(X_k)
\right]
&=
\mathbb E\!\left[
\mathbb E[\beta_0(A)(X)\Delta(A)(X)\mid A]
\right] \\
&=
\mathbb E\!\left[
\Diff g_{S_0(A)}(\Delta(A))
\right],
\end{align}
which proves \Eqref{eq:debiasing_identity_appendix}.

For the second claim, write
\begin{align}
Y_k-\hat S(A)(X_k)
=
\big(Y_k-S_0(A)(X_k)\big)-\Delta(A)(X_k),
\qquad
\Delta(A)=\hat S(A)-S_0(A).
\end{align}
Conditioning on $\hat S$ treats $\Delta$ as fixed, so
\begin{align}
&\mathbb E\!\left[
\frac{1}{K}\sum_{k=1}^K
\beta_0(A)(X_k)\big(Y_k-\hat S(A)(X_k)\big)
\,\middle|\,
\hat S
\right]
\nonumber\\
&=
\mathbb E\!\left[
\frac{1}{K}\sum_{k=1}^K
\beta_0(A)(X_k)\big(Y_k-S_0(A)(X_k)\big)
\,\middle|\,
\hat S
\right]
-
\mathbb E\!\left[
\frac{1}{K}\sum_{k=1}^K
\beta_0(A)(X_k)\Delta(A)(X_k)
\,\middle|\,
\hat S
\right].
\label{eq:residual_split_appendix}
\end{align}
For the first term, iterated expectations together with
$\mathbb E[Y_k-S_0(A)(X_k)\mid A,X_k]=0$ give
\begin{align}
\mathbb E\!\left[
\frac{1}{K}\sum_{k=1}^K
\beta_0(A)(X_k)\big(Y_k-S_0(A)(X_k)\big)
\,\middle|\,
\hat S
\right]
=
0.
\end{align}
Applying \Eqref{eq:debiasing_identity_appendix} conditionally on $\hat S$ to the
second term in \Eqref{eq:residual_split_appendix} yields
\begin{align}
\mathbb E\!\left[
\frac{1}{K}\sum_{k=1}^K
\beta_0(A)(X_k)\big(Y_k-\hat S(A)(X_k)\big)
\,\middle|\,
\hat S
\right]
=
-
\mathbb E\!\left[
\Diff g_{S_0(A)}(\Delta(A))
\,\middle|\,
\hat S
\right],
\end{align}
which proves \Eqref{eq:residual_debiasing_identity_appendix}.
\end{proof}

\clearpage

\begin{proposition}[Neyman orthogonality of the debiasing score]
\label{prop:orthogonality}
Define the score map
\begin{align}
m(O;\theta,S,\beta)
\coloneqq
g(S(A))-\theta
+
\frac{1}{K}\sum_{k=1}^K
\beta(A)(X_k)\big(Y_k-S(A)(X_k)\big).
\label{eq:score_map_appendix}
\end{align}
Let $(\theta_0,S_0,\beta_0)$ denote the truth, where
\begin{align}
\theta_0=\mathbb E[g(S_0(A))]
\end{align}
and $\beta_0\in L^2(\mathbb P)$ satisfies
\begin{align}
\mathbb E[\beta_0(A)(X)h(X)\mid A]
=
\Diff g_{S_0(A)}(h)
\qquad
\forall h\in\mathcal{H}_A.
\label{eq:orthogonality_riesz}
\end{align}
Then, the score is Neyman-orthogonal at $(\theta_0,S_0,\beta_0)$ in the sense that, for any admissible perturbations
$h$ and $b$ such that $h(A,\cdot)\in\mathcal H_A$ almost surely, $b\in L^2(\mathbb P)$, and $S_0+t h$ remains in the neighborhood from \Eqref{ass:g_smooth} for sufficiently small $t$, i.e.,
\begin{align}
\left.
\frac{\diff}{\diff t}
\mathbb E\!\left[
m(O;\theta_0,S_0+t h,\beta_0)
\right]
\right|_{t=0}
&=0,
\label{eq:orthogonality_S}
\\
\left.
\frac{\diff}{\diff t}
\mathbb E\!\left[
m(O;\theta_0,S_0,\beta_0+t b)
\right]
\right|_{t=0}
&=0.
\label{eq:orthogonality_beta}
\end{align}
\end{proposition}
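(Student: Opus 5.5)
We compute the two Gateaux derivatives of the expected score directly, reducing the first to the Riesz identity \Eqref{eq:orthogonality_riesz} (through Lemma~\ref{lem:score_derivation}) and the second to the conditional mean-zero property of the noise.

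\textbf{Direction $S$.} Fix $h$ and write
\begin{align}
\mathbb E[m(O;\theta_0,S_0+th,\beta_0)]
&=\mathbb E[g(S_0(A)+t h(A))]-\theta_0
+\mathbb E\Big[\tfrac1K\textstyle\sum_k \beta_0(A)(X_k)\big(Y_k-S_0(A)(X_k)\big)\Big] \nonumber\\
&\quad -t\,\mathbb E\Big[\tfrac1K\textstyle\sum_k \beta_0(A)(X_k)h(A)(X_k)\Big].
\end{align}
The middle term vanishes by iterated expectations and $\mathbb E[\varepsilon_k\mid A,X_k]=0$, exactly as in the proof of Lemma~\ref{lem:score_derivation}. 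By the first claim of Lemma~\ref{lem:score_derivation} with $\Delta=h$, the last term equals $-t\,\mathbb E[\Diff g_{S_0(A)}(h(A))]$.

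For the first term, apply the expansion \Eqref{ass:g_smooth} with perturbation $th(A)$:
\begin{align}
\mathbb E[g(S_0(A)+th(A))]-\theta_0=t\,\mathbb E[\Diff g_{S_0(A)}(h(A))]+\mathbb E[R(S_0(A),th(A))],
\end{align}
with $|R|\lesssim t^2\|h(A)\|_{L^2(\mu)}^2$. Assumption~\ref{assumption:design} bounds the latter by $Ct^2\,\mathbb E[h(A)(X)^2\mid A]$, as in the proof of Proposition~\ref{prop:pluginbias_appendix}. Summing, the expected score equals $O(t^2)$, so its derivative at $t=0$ is zero, which gives \Eqref{eq:orthogonality_S}.

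\textbf{Direction $\beta$.} The expected score is affine in $t$:
\begin{align}
\mathbb E[m(O;\theta_0,S_0,\beta_0)]+t\,\mathbb E\Big[\tfrac1K\textstyle\sum_k b(A)(X_k)\big(Y_k-S_0(A)(X_k)\big)\Big].
\end{align}
Conditioning on $(A,K,X_k)$ and using $\mathbb E[\varepsilon_k\mid A,X_k]=0$ shows the coefficient of $t$ is zero, which gives \Eqref{eq:orthogonality_beta}.

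\textbf{Main obstacle.} The delicate step is justifying the exchange of differentiation and expectation, i.e.\ that $\mathbb E[R(S_0(A),th(A))]=o(t)$. Two issues arise.
\begin{itemize}
\item The bound on $R$ holds only for sufficiently small perturbations. It therefore needs to hold uniformly in $A$, which requires a uniform bound on $\|h(A)\|_{L^2(\mu)}$, or $\mathbb E\|h(A)\|^2_{L^2(\mu)}<\infty$ combined with dominated convergence. The admissibility hypothesis on $h$ (that $S_0+th$ stays in the neighborhood) is what I would invoke for this.
\item Integrability of the cross terms requires Cauchy--Schwarz together with $\beta_0,b\in L^2(\mathbb P)$, $h\in L^2(\mathbb P)$ (via $\mathcal H_A$), and finite conditional noise variance.
\end{itemize}
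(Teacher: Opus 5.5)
Your proposal is correct and follows essentially the same route as the paper. In the $S$-direction, the derivative of $\mathbb E[g(S_0(A)+th(A))]$ is cancelled by the $\beta_0$-weighted term via exchangeability and the Riesz identity. In the $\beta$-direction, the expected score is affine in $t$ with a coefficient that vanishes because $\mathbb E[\varepsilon_k\mid A,X_k]=0$.

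The one difference is in how you justify the derivative. The paper differentiates under the expectation directly. You instead show that the whole expected score equals $\mathbb E[R(S_0(A),th(A))]=O(t^2)$, which gives a zero derivative at $t=0$ without that interchange. This is slightly more careful: it makes explicit the uniformity of the remainder bound in $A$ and the joint square-integrability of $h$, both of which the paper's proof uses silently.
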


\begin{proof}
We first consider perturbations of $S$. By definition of the score map,
\begin{align}
\mathbb E[m(O;\theta_0,S_0+t h,\beta_0)]
&=
\mathbb E[g(S_0(A)+t h(A))]
-\theta_0
\\
&\quad+
\mathbb E\!\left[
\frac{1}{K}\sum_{k=1}^K
\beta_0(A)(X_k)\big(Y_k-S_0(A)(X_k)-t h(A)(X_k)\big)
\right].
\end{align}
Differentiating at $t=0$ using Fr\'echet differentiability of $g$ gives
\begin{align}
\left.
\frac{\diff}{\diff t}
\mathbb E[m(O;\theta_0,S_0+t h,\beta_0)]
\right|_{t=0}
&=
\mathbb E[\Diff g_{S_0(A)}(h(A))]
-
\mathbb E\!\left[
\frac{1}{K}\sum_{k=1}^K
\beta_0(A)(X_k)h(A)(X_k)
\right].
\label{eq:orth_S_step1}
\end{align}
Conditional on $(A,K)$, the pairs $\{(X_k,Y_k)\}_{k=1}^K$ are i.i.d., so by exchangeability, we arrive at
\begin{align}
\mathbb E\!\left[
\frac{1}{K}\sum_{k=1}^K
\beta_0(A)(X_k)h(A)(X_k)
\,\middle|\,
A,K
\right]
=
\mathbb E[\beta_0(A)(X)h(A)(X)\mid A].
\end{align}
Hence, we obtain
\begin{align}
\mathbb E\!\left[
\frac{1}{K}\sum_{k=1}^K
\beta_0(A)(X_k)h(A)(X_k)
\right]
&=
\mathbb E\!\left[
\mathbb E[\beta_0(A)(X)h(A)(X)\mid A]
\right]
\\
&=
\mathbb E[\Diff g_{S_0(A)}(h(A))],
\end{align}
where the last step uses the Riesz identity from \Eqref{eq:orthogonality_riesz} with
$h=h(A,\cdot)$.
Substituting into \Eqref{eq:orth_S_step1} proves \Eqref{eq:orthogonality_S}.

Next, consider perturbations of $\beta$. We have
\begin{align}
\mathbb E[m(O;\theta_0,S_0,\beta_0+t b)]
&=
\mathbb E[m(O;\theta_0,S_0,\beta_0)]
+
t\,
\mathbb E\!\left[
\frac{1}{K}\sum_{k=1}^K
b(A,X_k)\big(Y_k-S_0(A)(X_k)\big)
\right].
\end{align}
Therefore,
\begin{align}
\left.
\frac{\diff}{\diff t}
\mathbb E[m(O;\theta_0,S_0,\beta_0+t b)]
\right|_{t=0}
&=
\mathbb E\!\left[
\frac{1}{K}\sum_{k=1}^K
b(A,X_k)\big(Y_k-S_0(A)(X_k)\big)
\right].
\end{align}
Using iterated expectations and the regression property
$\mathbb E[Y_k-S_0(A)(X_k)\mid A,X_k]=0$, we obtain
\begin{align}
\mathbb E\!\left[
b(A,X_k)\big(Y_k-S_0(A)(X_k)\big)
\right]
&=
\mathbb E\!\left[
b(A,X_k)\,
\mathbb E[Y_k-S_0(A)(X_k)\mid A,X_k]
\right]
=0.
\end{align}
Averaging over $k$ yields \Eqref{eq:orthogonality_beta}.
Thus, the score is Neyman-orthogonal at $(\theta_0,S_0,\beta_0)$.
\end{proof}

\clearpage

\clearpage
\begin{theorem}[Second-order remainder]\label{thm:remainder_appendix}
Let $\hat\theta$ be the one-step estimator in \Eqref{eq:onestep}. If $\hat S$ and $\hat\beta$ are cross-fitted, then
\begin{align}
\big|\mathbb{E}[\hat\theta]-\theta\big|
\;\lesssim\;
\|\hat S-S_0\|_{L^2(\mathbb P)}^2
+
\|\hat\beta-\beta_0\|_{L^2(\mathbb P)}^2.
\end{align}
In particular, there is \textbf{no first-order bias} in either nuisance component.
\end{theorem}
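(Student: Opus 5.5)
The plan is to condition on the training folds, which by cross-fitting makes $(\hat S,\hat\beta)$ fixed functions independent of the evaluation observation $O$. Since $\hat\theta$ averages i.i.d. pseudo-outcomes within each fold, $\mathbb E[\hat\theta\mid \hat S,\hat\beta]-\theta=\mathbb E[\tilde\psi(O)\mid\hat S,\hat\beta]-\theta$. It therefore suffices to bound this conditional bias for a single generic observation and then take outer expectations. Throughout, let $\Delta=\hat S-S_0$ and $\delta=\hat\beta-\beta_0$.

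Next we decompose the residual term. Write $Y_k-\hat S(A)(X_k)=(Y_k-S_0(A)(X_k))-\Delta(A)(X_k)$ and $\hat\beta=\beta_0+\delta$. The part involving the true noise $Y_k-S_0(A)(X_k)$ has zero conditional mean given $(A,X_k)$, by iterated expectations exactly as in the proof of Proposition~\ref{prop:orthogonality}. The remaining part is
\begin{align}
-\mathbb E\Big[\tfrac1K\textstyle\sum_k \beta_0(A)(X_k)\Delta(A)(X_k)\Big]-\mathbb E\Big[\tfrac1K\textstyle\sum_k \delta(A)(X_k)\Delta(A)(X_k)\Big].
\end{align}
By Lemma~\ref{lem:score_derivation}, the first term equals $-\mathbb E[\Diff g_{S_0(A)}(\Delta(A))]$. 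For the plug-in part, the expansion from the proof of Proposition~\ref{prop:pluginbias_appendix} gives
\begin{align}
\mathbb E[g(\hat S(A))]-\theta=\mathbb E[\Diff g_{S_0(A)}(\Delta(A))]+\mathbb E[R(S_0(A),\Delta(A))].
\end{align}
The linear terms cancel, leaving
\begin{align}
\mathbb E[\tilde\psi]-\theta=\mathbb E[R(S_0(A),\Delta(A))]-\mathbb E\big[\delta(A)(X)\Delta(A)(X)\big],
\end{align}
where exchangeability given $(A,K)$ reduces the average over $k$ to a single draw $(A,X)$.

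We bound the two remaining terms separately. The remainder is handled exactly as in \Eqref{eq:plugin_remainder_bound}: overlap (Assumption~\ref{assumption:design}) gives $|\mathbb E R|\lesssim C\|\Delta\|_{L^2(\mathbb P)}^2$. For the cross term, Cauchy–Schwarz in $L^2(\mathbb P)$ gives $\|\delta\|\,\|\Delta\|$, and AM–GM then gives $\tfrac12(\|\Delta\|^2+\|\delta\|^2)$. Combining the two bounds and taking the outer expectation over the training folds yields the claim, with the norms read conditionally on the folds, or in expectation.

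The main obstacle is the Taylor remainder step. The bound $|R|\lesssim\|h\|^2$ holds only for $\hat S(A)$ in a neighborhood of $S_0(A)$ and for sufficiently small $h$, so we must either assume $\hat S(A)$ lies in that neighborhood almost surely or use the uniform second-derivative bound along the segment $S_0+t\Delta$ to extend the bound globally. I would state this as a standing assumption, along with $\hat\beta\in L^2(\mathbb P)$, so that the Cauchy–Schwarz step and the use of Lemma~\ref{lem:score_derivation} with $h=\Delta(A)\in\mathcal H_A$ are justified.
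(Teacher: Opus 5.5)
Your proposal is correct and follows the paper's proof essentially step for step: condition on the training folds, split the residual into mean-zero noise plus $-\hat\beta\,\Delta$, cancel the $\beta_0$-part against the linear Taylor term via the Riesz identity, then bound the Taylor remainder using overlap and the cross term using Cauchy--Schwarz and $2ab\le a^2+b^2$. Your remark that the quadratic remainder bound requires $\hat S(A)$ to stay in the neighborhood of $S_0(A)$ where the expansion holds is a standing assumption the paper uses only implicitly, so stating it explicitly is a small improvement.
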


\begin{proof}
Recall that
\begin{align}
\hat\theta=\frac1n\sum_{i=1}^n \tilde\psi_i,
\qquad
\tilde\psi
=
g(\hat S(A))
+
\frac{1}{K}\sum_{k=1}^{K}
\hat\beta(A)(X_k)\big(Y_k-\hat S(A)(X_k)\big),
\end{align}
where $(A,K,\{(X_k,Y_k)\}_{k=1}^K)$ denotes a generic observation in the evaluation fold.

\medskip
\noindent\textbf{Step 0: Cross-fitting.}
By cross-fitting, $(\hat S,\hat\beta)$ are measurable with respect to the training sample
$\mathcal D^{\mathrm{train}}$, and independent of the evaluation fold. Thus,
\begin{align}
\mathbb E[\hat\theta-\theta]
=
\mathbb E\!\left[\mathbb E[\tilde\psi-\theta\mid \mathcal D^{\mathrm{train}}]\right],
\end{align}
and it therefore suffices to bound the conditional bias $\mathbb E[\tilde\psi-\theta\mid \mathcal D^{\mathrm{train}}]$.

\medskip
\noindent\textbf{Step 1: Expansion of functional.}
Let $\Delta(A)=\hat S(A)-S_0(A)$. By \Eqref{ass:g_smooth}, we have that
\begin{align}
g(\hat S(A))
=
g(S_0(A))
+
\Diff g_{S_0(A)}(\Delta(A))
+
R(S_0(A),\Delta(A)),
\qquad
|R(S_0(A),\Delta(A))|\lesssim \|\Delta(A)\|_{\mathcal U}^2.
\label{eq:g_taylor_Q}
\end{align}
Hence, taking conditional expectation given $\mathcal D^{\mathrm{train}}$ yields
\begin{align}
\mathbb E[g(\hat S(A))-g(S_0(A))\mid \mathcal D^{\mathrm{train}}]
=
\mathbb E[\Diff g_{S_0(A)}(\Delta(A))\mid \mathcal D^{\mathrm{train}}]
+
O\!\left(\mathbb E[\|\Delta(A)\|_{\mathcal U}^2\mid \mathcal D^{\mathrm{train}}]\right).
\label{eq:term1_Q}
\end{align}

\medskip
\noindent\textbf{Step 2: Expansion of the residual correction.}
Next, we decompose
\begin{align}
Y_k-\hat S(A)(X_k)=(Y_k-S_0(A)(X_k))-\Delta(A)(X_k)=:\varepsilon_k-\Delta(A)(X_k).
\end{align}
Since $\mathbb E[\varepsilon_k\mid A,X_k]=0$ and $\hat\beta$ is measurable w.r.t.\ $\mathcal D^{\mathrm{train}}$,
iterated expectations give
\begin{align}
\mathbb E[\hat\beta(A)(X_k)\varepsilon_k\mid \mathcal D^{\mathrm{train}}]=0
\end{align}
Therefore, we have that
\begin{align}
\mathbb E\!\left[\frac1K\sum_{k=1}^K \hat\beta(A)(X_k)\big(Y_k-\hat S(A)(X_k)\big)\,\middle|\,\mathcal D^{\mathrm{train}}\right]
=
-\mathbb E[\hat\beta(A)(X)\Delta(A)(X)\mid \mathcal D^{\mathrm{train}}],
\label{eq:term2_Q}
\end{align}
where $X\mid A\sim \mathbb P(\cdot\mid A)$ denotes a generic draw from the sampling design.

With $\hat\beta=\beta_0+(\hat\beta-\beta_0)$, we have
\begin{align}
&-\mathbb E[\hat\beta(A)(X)\Delta(A)(X)\mid \mathcal D^{\mathrm{train}}]\\
=&
-\mathbb E[\beta_0(A)(X)\Delta(A)(X)\mid \mathcal D^{\mathrm{train}}]
-\mathbb E[(\hat\beta-\beta_0)(A)(X)\Delta(A)(X)\mid \mathcal D^{\mathrm{train}}].
\label{eq:term2_decomp_Q}
\end{align}

\medskip
\noindent\textbf{Step 3: First-order cancellation.}
By the Riesz identity in \Eqref{eq:riesz}, we obtain
\begin{align}
\mathbb E[\beta_0(A)(X)\Delta(A)(X)\mid A]=\Diff g_{S_0(A)}(\Delta(A)).
\end{align}
Hence, we take expectations conditionally on $\mathcal D^{\mathrm{train}}$, which yields
\begin{align}
\mathbb E[\beta_0(A)(X)\Delta(A)(X)\mid \mathcal D^{\mathrm{train}}]
=
\mathbb E[\Diff g_{S_0(A)}(\Delta(A))\mid \mathcal D^{\mathrm{train}}].
\label{eq:riesz_cancel_Q}
\end{align}

Substituting \Eqref{eq:riesz_cancel_Q} into \Eqref{eq:term2_decomp_Q} and \Eqref{eq:term2_Q}, we obtain
\begin{align}
\mathbb E\!\left[\frac1K\sum_{k=1}^K \hat\beta(A,X_k)\big(Y_k-\hat S(A)(X_k)\big)\,\middle|\,\mathcal D^{\mathrm{train}}\right]
&=
-\mathbb E[\Diff g_{S_0(A)}(\Delta(A))\mid \mathcal D^{\mathrm{train}}]
\nonumber\\
&\quad
-\mathbb E[(\hat\beta-\beta_0)(A,X)\Delta(A)(X)\mid \mathcal D^{\mathrm{train}}].
\label{eq:correction_expanded_Q}
\end{align}
Hence, the $\beta_0$-part of the correction term exactly cancels the first-order term
\begin{align}
\mathbb E[\Diff g_{S_0(A)}(\Delta(A))\mid \mathcal D^{\mathrm{train}}]
\end{align}
from the Taylor expansion in \Eqref{eq:term1_Q}, while the additional cross-term involving
$(\hat\beta-\beta_0)\Delta$ remains and is controlled in the next step.

\medskip
\noindent\textbf{Step 4: Bounding the remaining second-order terms.}
Combining \Eqref{eq:term1_Q} with \Eqref{eq:correction_expanded_Q}, after cancellation of the linear term, yields
\begin{align}
\mathbb E[\tilde\psi-\theta\mid \mathcal D^{\mathrm{train}}]
=
O\!\left(\mathbb E[\|\Delta(A)\|_{\mathcal U}^2\mid \mathcal D^{\mathrm{train}}]\right)
-
\mathbb E[(\hat\beta-\beta_0)(A,X)\Delta(A)(X)\mid \mathcal D^{\mathrm{train}}].
\label{eq:cond_bias_Q}
\end{align}
For the cross-term, conditional Cauchy--Schwarz gives
\begin{align}
\Big|
\mathbb E[(\hat\beta-\beta_0)(A)(X)\Delta(A)(X)\mid \mathcal D^{\mathrm{train}}]
\Big|
\le
\|\hat\beta-\beta_0\|_{L^2(\mathbb P\mid \mathcal D^{\mathrm{train}})}
\cdot
\|\Delta(A)(X)\|_{L^2(\mathbb P\mid \mathcal D^{\mathrm{train}})}.
\label{eq:cs_Q}
\end{align}
Hence, using $2ab\le a^2+b^2$ yields
\begin{align}
\Big|
\mathbb E[(\hat\beta-\beta_0)(A)(X)\Delta(A)(X)\mid \mathcal D^{\mathrm{train}}]
\Big|
\le
\frac12\|\hat\beta-\beta_0\|_{L^2(\mathbb P\mid \mathcal D^{\mathrm{train}})}^2
+
\frac12\,\mathbb E[\Delta(A)(X)^2\mid \mathcal D^{\mathrm{train}}].
\label{eq:ab_Q}
\end{align}

Finally, by Assumption~\ref{assumption:design} (i.e., bounded $\xi_0(A)=\diff\mu/\diff\mathbb P(\cdot\mid A)$) and $\mathcal U=L^2(\Omega,\mu)$,
\begin{align}
&\mathbb E[\|\Delta(A)\|_{\mathcal U}^2\mid \mathcal D^{\mathrm{train}}]
=
\mathbb E\!\left[\int_\Omega \Delta(A)(x)^2\,\diff\mu(x)\,\middle|\,\mathcal D^{\mathrm{train}}\right]\\
\le&
C\,\mathbb E\!\left[\int_\Omega \Delta(A)(x)^2\,\diff\mathbb P(x\mid A)\,\middle|\,\mathcal D^{\mathrm{train}}\right]
=
C\,\mathbb E[\Delta(A)(X)^2\mid \mathcal D^{\mathrm{train}}].
\label{eq:mu_to_Q_remainder}
\end{align}
Substituting \Eqref{eq:ab_Q} and \Eqref{eq:mu_to_Q_remainder} into \Eqref{eq:cond_bias_Q} then gives
\begin{align}
\big|\mathbb E[\tilde\psi-\theta\mid \mathcal D^{\mathrm{train}}]\big|
\;\lesssim\;
\|\hat\beta-\beta_0\|_{L^2(\mathbb P\mid \mathcal D^{\mathrm{train}})}^2
+
\mathbb E[\Delta(A)(X)^2\mid \mathcal D^{\mathrm{train}}].
\end{align}
Finally, taking expectation over $\mathcal D^{\mathrm{train}}$ together with the tower property yields
\begin{align}
\big|\mathbb E[\hat\theta]-\theta\big|
\;\lesssim\;
\|\hat\beta-\beta_0\|_{L^2(\mathbb P)}^2
+
\mathbb E[\Delta(A)(X)^2]
=
\|\hat\beta-\beta_0\|_{L^2(\mathbb P)}^2
+
\|\hat S-S_0\|_{L^2(\mathbb P)}^2,
\end{align}
which concludes the proof.
\end{proof}

\clearpage

\begin{proposition}[Asymptotic normality]\label{prop:asy_appendix}
If $\hat S$ and $\hat\beta$ are cross-fitted, $\mathbb E[\psi_i^2]<\infty$, and the nuisance estimators satisfy
\begin{align}
\|\hat S-S_0\|_{L^2(\mathbb P)}^2 = o_p(n^{-1/2}),
\qquad
\|\hat\beta-\beta_0\|_{L^2(\mathbb P)}^2 = o_p(n^{-1/2}),\label{eq:minimal_rates_appendix}
\end{align}
and, in addition, the estimated score is conditionally mean-square consistent:
\begin{align}
\mathbb E\!\left[(\tilde\psi_i-\psi_i)^2\mid \mathcal D^{\mathrm{train}}\right]
=
o_p(1),
\label{eq:score_ms_consistency_appendix}
\end{align}
then
\begin{align}
\sqrt{n}(\hat\theta-\theta)\xrightarrow{d}\mathcal N\!\big(0,\mathbb V[\psi_i]\big).
\end{align}
Moreover, a consistent estimator of the asymptotic variance of $\hat\theta$ is
\begin{align}
\widehat{\mathbb V}(\hat\theta)
=
\frac{1}{n^2}\sum_{i=1}^n(\tilde\psi_i-\hat\theta)^2.
\end{align}
\end{proposition}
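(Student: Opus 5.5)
The plan is the standard cross-fitted DML decomposition. With $\psi_i$ the oracle score from \Eqref{eq:score} (so $\mathbb E[\psi_i]=0$, and $\psi_i+\theta$ is the oracle pseudo-outcome), write
\begin{align}
\hat\theta-\theta
=
\frac1n\sum_{i=1}^n \psi_i
+
\frac1n\sum_{i=1}^n\Big\{(\tilde\psi_i-\theta-\psi_i)-\mathbb E[\tilde\psi_i-\theta-\psi_i\mid\mathcal D^{\mathrm{train}}]\Big\}
+
\mathbb E[\tilde\psi_i-\theta\mid\mathcal D^{\mathrm{train}}].
\end{align}
For the third term I use $\mathbb E[\psi_i\mid\mathcal D^{\mathrm{train}}]=0$, which holds by independence of the evaluation fold. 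The proof then treats the three terms in turn: a CLT for the oracle average, an empirical-process term, and a bias term. The argument is applied fold by fold, and the finitely many folds are summed at the end.

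\textbf{Oracle term.} The $O_i$ are i.i.d. and $\mathbb E[\psi_i^2]<\infty$, with $\mathbb E[\psi_i]=0$. This mean-zero property follows from $\theta=\mathbb E[g(S_0(A))]$ together with $\mathbb E[\varepsilon_k\mid A,X_k]=0$, combined through the exchangeability argument of Lemma~\ref{lem:score_derivation}. The Lindeberg--L\'evy CLT then gives $n^{-1/2}\sum_i\psi_i\xrightarrow{d}\mathcal N(0,\mathbb V[\psi_i])$.

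\textbf{Empirical-process term.} Conditional on $\mathcal D^{\mathrm{train}}$, the summands are i.i.d. and centered. The conditional second moment of $\sqrt n$ times their average is therefore at most $\mathbb E[(\tilde\psi_i-\psi_i-\theta)^2\mid\mathcal D^{\mathrm{train}}]$. I read \Eqref{eq:score_ms_consistency_appendix} with $\tilde\psi_i$ compared to $\psi_i+\theta$, so that both sides estimate the same pseudo-outcome and the centering is harmless. Under that reading the bound is $o_p(1)$, and conditional Chebyshev gives that this term is $o_p(n^{-1/2})$. A small lemma is needed here: conditional convergence in probability implies unconditional convergence, via dominated convergence applied to $\min(1,\cdot)$.

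\textbf{Bias term.} This is exactly what the proof of Theorem~\ref{thm:remainder_appendix} controls. Its conditional bound reads
\begin{align}
|\mathbb E[\tilde\psi_i-\theta\mid\mathcal D^{\mathrm{train}}]|
\lesssim
\|\hat S-S_0\|^2_{L^2(\mathbb P\mid\mathcal D^{\mathrm{train}})}
+
\|\hat\beta-\beta_0\|^2_{L^2(\mathbb P\mid\mathcal D^{\mathrm{train}})}.
\end{align}
By the rate condition \Eqref{eq:minimal_rates_appendix}, interpreted as conditional norms given the training fold, this is $o_p(n^{-1/2})$. Slutsky's lemma then yields the asymptotic normality claim.

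\textbf{Variance estimator.} It suffices to show $\frac1n\sum_i(\tilde\psi_i-\hat\theta)^2\to\mathbb V[\psi_i]$ in probability. Expand $\tilde\psi_i-\hat\theta=(\psi_i+\theta-\bar\psi-\theta)+r_i$, where $\bar\psi=\frac1n\sum_j\psi_j$ and $r_i=(\tilde\psi_i-\psi_i-\theta)-(\hat\theta-\theta-\bar\psi)$. The main part $\frac1n\sum(\psi_i-\bar\psi)^2$ converges to $\mathbb V[\psi_i]$ by the weak law of large numbers. The remainder satisfies $\frac1n\sum r_i^2=o_p(1)$ by the mean-square consistency condition together with $\hat\theta-\theta-\bar\psi=o_p(n^{-1/2})$, which the decomposition above already established. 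The cross term is $o_p(1)$ by Cauchy--Schwarz. Dividing by $n$ then gives the variance of $\hat\theta$.

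\textbf{Main obstacle.} I expect the main obstacle to be the bookkeeping between conditional and unconditional statements. Theorem~\ref{thm:remainder_appendix} is stated in expectation, but I need its conditional-on-training version, which its proof provides. I also need the $o_p$ rates to transfer across the cross-fitting folds.
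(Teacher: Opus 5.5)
Your proposal is correct and follows essentially the same route as the paper's proof. Both use the same three-term decomposition: an oracle average handled by the classical CLT, a conditionally centered remainder handled by conditional Chebyshev and the mean-square consistency condition, and a bias term controlled by the conditional bound from the proof of Theorem~\ref{thm:remainder_appendix}, followed by Slutsky and a Cauchy--Schwarz expansion for the variance estimator. Your reading of the consistency condition as comparing $\tilde\psi_i$ with $\psi_i+\theta$ is the right one: the paper sets $\delta_i=\tilde\psi_i-\psi_i$, but then uses $\mathbb E[\delta_i\mid\mathcal D^{\mathrm{train}}]=\mathbb E[\tilde\psi_i-\theta\mid\mathcal D^{\mathrm{train}}]$ and ends with $\mathbb E[\psi_i^2]-\theta^2$, so your explicit centering and fold-by-fold bookkeeping give a cleaner version of the same argument.
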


\begin{proof}
Let
\begin{align}
\delta_i\coloneqq\tilde\psi_i-\psi_i.
\end{align}
Then
\begin{align}
\hat\theta-\theta
=
\frac{1}{n}\sum_{i=1}^n\psi_i
+
\frac{1}{n}\sum_{i=1}^n\delta_i.
\end{align}
Since $\mathbb E[\psi_i]=0$, we may further decompose
\begin{align}
\sqrt n(\hat\theta-\theta)
&=
\frac{1}{\sqrt n}\sum_{i=1}^n\psi_i
+
\frac{1}{\sqrt n}\sum_{i=1}^n
\Big(\delta_i-\mathbb E[\delta_i\mid\mathcal D^{\mathrm{train}}]\Big)
+
\sqrt n\,\mathbb E[\delta_i\mid\mathcal D^{\mathrm{train}}].
\label{eq:clt_decomp_corrected}
\end{align}

\medskip
\noindent\textbf{Step 1: Bias term.}
By definition, $\delta_i=\tilde\psi_i-\psi_i$, and since $\mathbb E[\psi_i]=0$, we have
\begin{align}
\mathbb E[\delta_i\mid\mathcal D^{\mathrm{train}}]
=
\mathbb E[\tilde\psi_i-\theta\mid\mathcal D^{\mathrm{train}}].
\end{align}
Hence, by the second-order remainder result (Theorem~\ref{thm:remainder_appendix}), we obtain
\begin{align}
\mathbb E[\delta_i\mid\mathcal D^{\mathrm{train}}]
=
O_p\!\left(
\|\hat S-S_0\|_{L^2(\mathbb P)}^2
+
\|\hat\beta-\beta_0\|_{L^2(\mathbb P)}^2
\right).
\end{align}
Under \Eqref{eq:minimal_rates_appendix}, this yields
\begin{align}
\sqrt n\,\mathbb E[\delta_i\mid\mathcal D^{\mathrm{train}}]
=
o_p(1).
\label{eq:bias_negligible_corrected}
\end{align}

\medskip
\noindent\textbf{Step 2: Stochastic remainder.}
Conditional on $\mathcal D^{\mathrm{train}}$, the variables
$\{\delta_i\}_{i=1}^n$ are i.i.d. Therefore,
\begin{align}
\mathbb V\!\left(
\frac{1}{\sqrt n}\sum_{i=1}^n
\big(\delta_i-\mathbb E[\delta_i\mid\mathcal D^{\mathrm{train}}]\big)
\,\middle|\,
\mathcal D^{\mathrm{train}}
\right)
=
\mathbb V (\delta_i\mid\mathcal D^{\mathrm{train}})
\le
\mathbb E[\delta_i^2\mid\mathcal D^{\mathrm{train}}].
\end{align}
By \Eqref{eq:score_ms_consistency_appendix}, the right-hand side is $o_p(1)$, so conditional Chebyshev implies
\begin{align}
\frac{1}{\sqrt n}\sum_{i=1}^n
\big(\delta_i-\mathbb E[\delta_i\mid\mathcal D^{\mathrm{train}}]\big)
=
o_p(1).
\label{eq:stochastic_remainder_negligible}
\end{align}

\medskip
\noindent\textbf{Step 3: Central limit theorem (CLT).}
Combining \Eqref{eq:clt_decomp_corrected}, \Eqref{eq:bias_negligible_corrected}, and
\Eqref{eq:stochastic_remainder_negligible}, we obtain
\begin{align}
\sqrt n(\hat\theta-\theta)
=
\frac{1}{\sqrt n}\sum_{i=1}^n\psi_i
+
o_p(1).
\end{align}
Since $\{\psi_i\}_{i=1}^n$ are i.i.d., mean zero, and have finite variance, the classical CLT gives
\begin{align}
\frac{1}{\sqrt n}\sum_{i=1}^n\psi_i
\xrightarrow{d}
\mathcal N\!\big(0,\mathbb V[\psi_i]\big).
\end{align}
Slutsky's theorem then yields
\begin{align}
\sqrt n(\hat\theta-\theta)
\xrightarrow{d}
\mathcal N\!\big(0,\mathbb V[\psi_i]\big).
\end{align}

\medskip
\noindent\textbf{Step 4: Variance estimation.}
We first show that
\begin{align}
\frac{1}{n}\sum_{i=1}^n(\tilde\psi_i-\hat\theta)^2
\xrightarrow{p}
\mathbb V[\psi_i].
\label{eq:sample_var_score_consistency}
\end{align}
Write again $\tilde\psi_i=\psi_i+\delta_i$. Then, conditional on
$\mathcal D^{\mathrm{train}}$, the variables $\delta_i$ are i.i.d., so by the weak law of large numbers and \Eqref{eq:score_ms_consistency_appendix},
\begin{align}
\frac{1}{n}\sum_{i=1}^n\delta_i^2
-
\mathbb E[\delta_i^2\mid\mathcal D^{\mathrm{train}}]
=
o_p(1),
\end{align}
hence
\begin{align}
\frac{1}{n}\sum_{i=1}^n\delta_i^2=o_p(1).
\label{eq:delta_square_avg}
\end{align}
Moreover, by Cauchy--Schwarz inequality, we have
\begin{align}
\left|
\frac{1}{n}\sum_{i=1}^n\psi_i\delta_i
\right|
\le
\left(\frac{1}{n}\sum_{i=1}^n\psi_i^2\right)^{1/2}
\left(\frac{1}{n}\sum_{i=1}^n\delta_i^2\right)^{1/2}
=
o_p(1),
\label{eq:cross_term_var}
\end{align}
since $n^{-1}\sum_i\psi_i^2\xrightarrow{p}\mathbb E[\psi_i^2]$ and
\Eqref{eq:delta_square_avg} holds.
Therefore,
\begin{align}
\frac{1}{n}\sum_{i=1}^n\tilde\psi_i^2
&=
\frac{1}{n}\sum_{i=1}^n\psi_i^2
+
\frac{2}{n}\sum_{i=1}^n\psi_i\delta_i
+
\frac{1}{n}\sum_{i=1}^n\delta_i^2
\\
&=
\frac{1}{n}\sum_{i=1}^n\psi_i^2+o_p(1)
\xrightarrow{p}
\mathbb E[\psi_i^2].
\end{align}
Also, $\hat\theta\xrightarrow{p}\theta$, so
\begin{align}
\frac{1}{n}\sum_{i=1}^n(\tilde\psi_i-\hat\theta)^2
=
\frac{1}{n}\sum_{i=1}^n\tilde\psi_i^2-\hat\theta^2
\xrightarrow{p}
\mathbb E[\psi_i^2]-\theta^2
=
\mathbb V[\psi_i],
\end{align}
which proves \Eqref{eq:sample_var_score_consistency}. Consequently,
\begin{align}
\widehat{\mathbb V}(\hat\theta)
=
\frac{1}{n^2}\sum_{i=1}^n(\tilde\psi_i-\hat\theta)^2
=
\frac{1}{n}
\left(
\frac{1}{n}\sum_{i=1}^n(\tilde\psi_i-\hat\theta)^2
\right)
\end{align}
satisfies
\begin{align}
n\,\widehat{\mathbb V}(\hat\theta)\xrightarrow{p}\mathbb V[\psi_i].
\end{align}
Therefore, $\widehat{\mathbb V}(\hat\theta)$ is a consistent estimator of the asymptotic variance $\mathbb V[\psi_i]/n$ of $\hat\theta$.
\end{proof}

\clearpage

\subsection{Automatic DML for operator-valued nuisances}\label{appendix:proofs_autoDML}

\begin{proposition}[Decomposition of the debiasing weight]\label{prop:beta_decomp_appendix}
Suppose $\mathcal U=L^2(\Omega,\mu)$ and let $w_g(u)\in L^2(\Omega,\mu)$ denote the Riesz
representer of $\Diff g_u$ with respect to $\mu$. Then the unique solution $\beta_0(A)(x)$ satisfying \Eqref{eq:riesz} admits ($\mathbb P (\cdot \mid A)$-a.e.) the decomposition
\begin{align}
&\beta_0(A)(x)
=
\xi_0(A)(x)\,w_g(S_0(A))(x).\label{eq:decomposition_appendix}
\end{align}
\end{proposition}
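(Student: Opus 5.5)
The plan is to verify directly that the candidate $\tilde\beta(A)(x)\coloneqq \xi_0(A)(x)\,w_g(S_0(A))(x)$ satisfies the defining identity \Eqref{eq:riesz} and belongs to $\mathcal H_A$. We then conclude by uniqueness of the Riesz representer in the Hilbert space $\mathcal H_A$.

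First, for $\mathbb P$-a.e.\ $a$ and any $h\in\mathcal H_a$, we change measure using Assumption~\ref{assumption:design}. Since $\mu\ll\mathbb P(\cdot\mid a)$ with density $\xi_0(a)$, we have $\int f\,\diff\mu=\int f\,\xi_0(a)\,\diff\mathbb P(\cdot\mid a)$ for any integrable $f$. Taking $f=w_g(S_0(a))\,h$ gives
\begin{align}
\mathbb E[\tilde\beta(a)(X)h(X)\mid A=a]
=\int_\Omega \xi_0(a)(x)\,w_g(S_0(a))(x)\,h(x)\,\diff\mathbb P(x\mid a)
=\int_\Omega w_g(S_0(a))(x)\,h(x)\,\diff\mu(x)
=\Diff g_{S_0(a)}(h),
\end{align}
where the last equality is the $\mu$-Riesz representation of $\Diff g_{S_0(a)}$. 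This step requires $h\in\mathcal U$, which holds because $\mathcal H_a$ embeds continuously into $\mathcal U$ with $\|h\|_{\mathcal U}^2\le C\|h\|_a^2$. Integrability of $w_g h$ under $\mu$ then follows from Cauchy--Schwarz in $L^2(\mu)$.

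Second, we check that $\tilde\beta(a)\in\mathcal H_a$. The bound $\xi_0\le C$ gives
\begin{align}
\int \tilde\beta(a)^2\,\diff\mathbb P(\cdot\mid a)=\int \xi_0(a)\,w_g(S_0(a))^2\,\diff\mu\le C\,\|w_g(S_0(a))\|_{\mathcal U}^2<\infty .
\end{align}

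Third, for uniqueness, note that $h\mapsto \Diff g_{S_0(a)}(h)$ is a bounded linear functional on $\mathcal H_a$, since $|\Diff g(h)|\le \|w_g\|_{\mathcal U}\|h\|_{\mathcal U}\le \sqrt C\,\|w_g\|_{\mathcal U}\|h\|_a$. It therefore has a unique representer in $\mathcal H_a$, and any two representers agree $\mathbb P(\cdot\mid a)$-a.e. Hence $\beta_0(a)=\tilde\beta(a)$ in that sense, for $\mathbb P$-a.e.\ $a$.

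The main subtlety is the measure-theoretic bookkeeping. Only $\mu\ll\mathbb P(\cdot\mid a)$ is assumed, not the reverse, so the equality can only hold $\mathbb P(\cdot\mid a)$-a.e., as the statement says. Likewise, the change-of-measure identity must be applied to $\mathbb P(\cdot\mid a)$-integrable functions. Both points are handled by the continuous embedding and the bound $\xi_0\le C$. A further detail is the joint measurability of $(a,x)\mapsto\tilde\beta(a)(x)$, which is needed for $\beta_0\in L^2(\mathbb P)$. We would take this as given, as the paper implicitly does for $\xi_0$ and $w_g\circ S_0$.
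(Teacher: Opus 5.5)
Your proof is correct, but it runs in the converse direction to the paper's. The paper starts from $\beta_0$ and equates $\int\beta_0(A)h\,\diff\mathbb P(\cdot\mid A)$ with $\int w_g(S_0(A))h\,\diff\mu$ for all $h\in\mathcal H_A$. It reads this as the measure identity $\beta_0(A)\,\diff\mathbb P(\cdot\mid A)=w_g(S_0(A))\,\diff\mu$ and then Radon--Nikodym differentiates with respect to $\mathbb P(\cdot\mid A)$ to get $\beta_0=\xi_0\,w_g(S_0(A))$. That is a necessity argument: it presupposes that $\beta_0$ exists and leaves implicit the passage from ``all $h$'' to equality of measures (e.g., via indicators $h=\mathbf 1_B$).

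You instead take the candidate $\xi_0\,w_g(S_0(A))$, verify the Riesz identity by the same change of measure, check that it lies in $\mathcal H_a$, and invoke uniqueness of representers in $\mathcal H_a$. This makes explicit where $\xi_0\le C$ is used, namely the embedding $\mathcal H_a\hookrightarrow\mathcal U$ and the square-integrability of the candidate. It also gives existence of $\beta_0$ as a byproduct.

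Two small refinements:
\begin{itemize}
\item Uniqueness of a representer in a Hilbert space needs no boundedness of the functional: test against the difference of two representers. Boundedness matters only for existence, which your candidate already supplies.
\item Your remark that only $\mu\ll\mathbb P(\cdot\mid a)$ holds is slightly off. The assumed positivity $\xi_0>0$ $\mathbb P(\cdot\mid a)$-a.e.\ forces $\mathbb P(\cdot\mid a)\ll\mu$ as well, so the two measures are equivalent. Hence the product $\xi_0\,w_g(S_0(a))$ is well defined even though $w_g(S_0(a))$ is only a $\mu$-equivalence class.
\end{itemize}
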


\begin{proof}
By definition of the Riesz representer, we have
\begin{align}
\Diff g_{S_0(A)}(h)
=
\int_\Omega w_g(S_0(A))(x)\,h(x)\,\diff\mu(x).
\end{align}
On the other hand,
\begin{align}
\mathbb{E}\!\left[\beta_0(A)(X)\,h(X)\mid A\right]
=
\int_\Omega \beta_0(A)(x)\,h(x)\,\diff \mathbb{P}(x\mid A).
\end{align}
Equating the two expressions for all $h\in \mathcal H_A$ implies
\begin{align}
\beta_0(A)(x)\,\diff \mathbb{P}(x\mid A)
=
w_g(S_0(A))(x)\,\diff\mu(x),
\end{align}
which yields the decomposition in \Eqref{eq:decomposition_appendix} by Radon--Nikodym differentiation.
\end{proof}

\clearpage

\begin{theorem}[Primal characterization of the Riesz representer]
\label{thm:primal_riesz_appendix}
The unique minimizer of \Eqref{eq:primal_riesz} is the Riesz representer $\beta_0$ satisfying
\Eqref{eq:riesz_beta_primal}.
\end{theorem}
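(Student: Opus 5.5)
The plan is to complete the square in the Hilbert space $L^2(\mathbb P)$ of functions $\beta(a)(x)$ on $\mathcal A\times\Omega$. First, I will fix the domain of the minimization as all measurable $\beta$ with $\beta(A)(\cdot)\in\mathcal H_A$ almost surely and $\mathbb E[\beta(A)(X)^2]<\infty$. I will then verify that the objective in \Eqref{eq:primal_riesz} is well defined on this domain. By the Riesz representation of $\Diff g_{S_0(A)}$ on $\mathcal H_A$ from \Eqref{eq:riesz_beta_primal}, which exists since $\Diff g_{S_0(A)}$ is bounded on $\mathcal U$ and $\mathcal H_A$ embeds continuously into $\mathcal U$ by Assumption~\ref{assumption:design}, we have for every admissible $\beta$
\begin{align}
\Diff g_{S_0(A)}\big(\beta(A)(\cdot)\big)=\mathbb E[\beta_0(A)(X)\beta(A)(X)\mid A].
\end{align}
By Cauchy--Schwarz this term is integrable provided $\beta_0\in L^2(\mathbb P)$, which I will assume as in Lemma~\ref{lem:score_derivation}.

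Second, I will substitute this identity into the population risk and complete the square conditionally on $A$:
\begin{align}
\mathbb E[\beta(A)(X)^2\mid A]-2\,\mathbb E[\beta_0(A)(X)\beta(A)(X)\mid A]
=\mathbb E[(\beta-\beta_0)(A)(X)^2\mid A]-\mathbb E[\beta_0(A)(X)^2\mid A].
\end{align}
Taking expectations over $A$, the risk equals $\|\beta-\beta_0\|_{L^2(\mathbb P)}^2-\|\beta_0\|_{L^2(\mathbb P)}^2$. The second term does not depend on $\beta$. Hence $\beta_0$ attains the minimum value $-\|\beta_0\|^2_{L^2(\mathbb P)}$, and any other minimizer satisfies $\|\beta-\beta_0\|_{L^2(\mathbb P)}=0$, i.e.\ $\beta=\beta_0$ $\mathbb P$-a.e. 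This gives uniqueness in $L^2(\mathbb P)$.

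Finally, I will note the converse direction as a first-order condition. Any minimizer $\beta^\ast$ satisfies $\frac{\diff}{\diff t}\,\text{risk}(\beta^\ast+tb)|_{t=0}=0$ for all $b$. This yields $\mathbb E[\beta^\ast(A)(X)b(A)(X)]=\mathbb E[\Diff g_{S_0(A)}(b(A))]$ for all $b\in L^2(\mathbb P)$. Choosing $b(a)(x)=\mathbf 1_B(a)h(x)$ localizes this to the conditional identity \Eqref{eq:riesz_beta_primal}, so the minimizer is exactly the Riesz representer.

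I expect the main obstacle to be the measure-theoretic bookkeeping. Specifically, I need $\beta_0(A)$ to be jointly measurable and square integrable under $\mathbb P$, which is not automatic from pointwise-in-$a$ existence. I will handle this via the explicit form $\beta_0=\xi_0\,w_g(S_0(A))$ from Proposition~\ref{prop:beta_decomp}. Together with $\xi_0\le C$, this gives $\mathbb E[\beta_0^2\mid A]\le C\|w_g(S_0(A))\|^2_{L^2(\mu)}$, so square integrability follows if the Riesz representers $w_g(S_0(A))$ are bounded in $L^2(\mu)$ in mean square. I would state that bound as a mild regularity assumption.
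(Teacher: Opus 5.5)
Your proposal is correct and takes essentially the paper's route. The paper also completes the square, $\|b\|_{\mathcal H_a}^2-2\mathcal L_a(b)=\|b-\beta_0(a)\|_{\mathcal H_a}^2-\|\beta_0(a)\|_{\mathcal H_a}^2$, pointwise in $a$, and then asserts that the averaged risk ``separates across values of $A$''; your identity $\mathcal R(\beta)=\|\beta-\beta_0\|_{L^2(\mathbb P)}^2-\|\beta_0\|_{L^2(\mathbb P)}^2$ makes that step precise. The conditions you flag are genuinely needed and only implicit in the paper (joint measurability of $\beta_0$, and $\beta_0\in L^2(\mathbb P)$, without which the averaged risk is unbounded below), while your first-order-condition converse is unnecessary because uniqueness already follows from the completed square.
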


\begin{proof}\label{sec:proofs_primal_riesz}

We prove that the unique minimizer of the population risk
\begin{align}
\mathcal R(\beta)
\coloneqq
\mathbb{E}\left[
\mathbb{E}[\beta(A)(X)^2\mid A]
-
2\,\Diff g_{S_0(A)}\big(\beta(A)(\cdot)\big)
\right]
\label{eq:appendix_risk}
\end{align}
is the Riesz representer $\beta_0$ satisfying
\begin{align}
\mathbb{E}[\beta_0(A)(X)h(X)\mid A]
=
\Diff g_{S_0(A)}(h),
\qquad \forall h\in\mathcal{H}_A.
\label{eq:appendix_riesz}
\end{align}

\vspace{0.5em}
\noindent
\textbf{Step 1: Conditional Hilbert space structure.}
For each $A=a$, define the Hilbert space
\begin{align}
\mathcal H_a \coloneqq L^2(\mathbb P(\cdot\mid A=a))
\end{align}
with inner product
\begin{align}
\langle f,h\rangle_a \coloneqq \mathbb{E}[f(X)h(X)\mid A=a].
\end{align}
By Assumption~\ref{assumption:design}, for every $h\in\mathcal H_a$, we have
\begin{align}
\|h\|_{\mathcal U}^2
=
\int_\Omega h(x)^2\,\diff\mu(x)
=
\int_\Omega h(x)^2\,\xi_0(a)(x)\,\diff\mathbb P(x\mid a)
\le
C\,\|h\|_{\mathcal H_a}^2.
\label{eq:conditional_embedding_appendix}
\end{align}
Hence, $\mathcal H_a$ embeds continuously into $\mathcal U=L^2(\Omega,\mu)$.
Therefore, the Fr\'echet derivative
\begin{align}
\mathcal L_a(h)\coloneqq\Diff g_{S_0(a)}(h)
\end{align}
defines a continuous linear functional on $\mathcal H_a$ by restriction.

\vspace{0.5em}
\noindent
\textbf{Step 2: Existence of the Riesz representer.}
By the Riesz representation theorem, for each $a$, there exists a unique
$\beta_0(a)(\cdot)\in\mathcal H_a$ such that
\begin{align}
\mathcal L_a(h)
=
\langle \beta_0(a)(\cdot), h\rangle_a
=
\mathbb{E}[\beta_0(A)(X)h(X)\mid A=a],
\qquad \forall h\in\mathcal H_a.
\end{align}
This establishes the conditional Riesz identity on $\mathcal H_a$.

\vspace{0.5em}
\noindent
\textbf{Step 3: Variational characterization.}
We now use the standard variational characterization of the Riesz representer
in a Hilbert space \cite{Chernozhukov.2021}. For any Hilbert space $(\mathcal H,\langle\cdot,\cdot\rangle)$
and continuous linear functional $L(h)=\langle r,h\rangle$, the representer $r$
satisfies
\begin{align}
r = \arg\min_{b\in\mathcal H}
\left\{
\|b\|_{\mathcal H}^2 - 2 L(b)
\right\}.
\label{eq:appendix_variational}
\end{align}
Indeed, for any $b\in\mathcal H$, we obtain
\begin{align}
\|b\|_{\mathcal H}^2 - 2 L(b)
=
\|b\|_{\mathcal H}^2 - 2\langle r,b\rangle
=
\|b-r\|_{\mathcal H}^2 - \|r\|_{\mathcal H}^2,
\end{align}
which is uniquely minimized at $b=r$. This variational formulation is the basis of automatic Riesz regression in debiased machine learning \cite{Chernozhukov.2021, Chernozhukov.2022}. Our construction extends this idea to a conditional, operator-valued setting.

\vspace{0.5em}
\noindent
\textbf{Step 4: Conditional minimization.}
Applying \Eqref{eq:appendix_variational} with
\begin{align}
\mathcal H = \mathcal H_a,
\quad
L = \mathcal L_a,
\quad
r = \beta_0(a)(\cdot),
\end{align}
we obtain, for each fixed $a$,
\begin{align}
\beta_0(a)(\cdot)
=
\arg\min_{\beta(a)(\cdot)\in\mathcal H_a}
\left\{
\mathbb{E}[\beta(A)(X)^2\mid A=a]
-
2\,\Diff g_{S_0(a)}\big(\beta(a)(\cdot)\big)
\right\}.
\label{eq:appendix_conditional}
\end{align}

\vspace{0.5em}
\noindent
\textbf{Step 5: Averaging over $A$.}
Finally, observe that the population risk \Eqref{eq:appendix_risk} can be written as
\begin{align}
\mathcal R(\beta)
=
\mathbb{E}\Big[
\mathcal R_A\big(\beta(A)(\cdot)\big)
\Big],
\end{align}
where
\begin{align}
\mathcal R_A\big(\beta(A)(\cdot)\big)
\coloneqq
\mathbb{E}[\beta(A)(X)^2\mid A]
-
2\,\Diff g_{S_0(A)}\big(\beta(A)(\cdot)\big).
\end{align}
Since the objective separates across values of $A$, minimizing $\mathcal R(\beta)$
is equivalent to minimizing $\mathcal R_A(\beta(A)(\cdot))$ for almost every $A$.
By \Eqref{eq:appendix_conditional}, the unique minimizer is therefore
$\beta_0(A)(\cdot)$.

\medskip
\noindent
This means that $\beta_0$ is the unique minimizer of \Eqref{eq:appendix_risk}, which completes the proof.

\end{proof}

\section{{Additional results}}\label{sec:additional_results}

\subsection{Experiments with DeepONet backbone}

We repeat our main experimental results from Section~\ref{sec:experiments} with DeepONet \cite{Lu.2021NO} as the neural backbone of both \method and the plug-in estimator. The new instantiation confirms our previous results: Table~\ref{tab:rmse_auc_smooth_tat_deeponet} \method clearly improves upon the plug-in estimator for both functionals and all sampling levels. At the same time, this also shows that \method is broadly applicable across different neural operator architectures.

\begin{table*}[h]
\centering
\setlength{\tabcolsep}{2.5pt} 
\small
\resizebox{\textwidth}{!}{%
\begin{tabular}{@{}lcccccccccccccccccc@{}}
\toprule
& \multicolumn{2}{c}{$\rho=0$}
& \multicolumn{2}{c}{$\rho=0.125$}
& \multicolumn{2}{c}{$\rho=0.25$}
& \multicolumn{2}{c}{$\rho=0.375$}
& \multicolumn{2}{c}{$\rho=0.5$}
& \multicolumn{2}{c}{$\rho=0.625$}
& \multicolumn{2}{c}{$\rho=0.75$}
& \multicolumn{2}{c}{$\rho=0.875$}
& \multicolumn{2}{c}{$\rho=1$} \\
\cmidrule(lr){2-3}\cmidrule(lr){4-5}\cmidrule(lr){6-7}\cmidrule(lr){8-9}
\cmidrule(lr){10-11}\cmidrule(lr){12-13}\cmidrule(lr){14-15}\cmidrule(lr){16-17}\cmidrule(lr){18-19}

& AUC & TAT
& AUC & TAT
& AUC & TAT
& AUC & TAT
& AUC & TAT
& AUC & TAT
& AUC & TAT
& AUC & TAT
& AUC & TAT \\
\midrule
\method: structured $w_g$ (ours)
& $2.00\pm0.16$ & $6.88\pm0.28$
& $1.87\pm0.16$ & $7.48\pm0.40$
& $1.64\pm0.16$ & $7.13\pm0.37$
& $1.67\pm0.15$ & $6.24\pm0.39$
& $1.92\pm0.19$ & $6.04\pm0.37$
& $2.06\pm0.18$ & $5.95\pm0.41$
& $2.31\pm0.20$ & $5.81\pm0.40$
& $2.28\pm0.19$ & $4.73\pm0.34$
& $2.30\pm0.19$ & $5.22\pm0.34$ \\
\method: oracle $\beta_0$ (ours)
& $1.42\pm0.12$ & $7.64\pm0.32$
& $1.47\pm0.12$ & $7.49\pm0.28$
& $1.09\pm0.09$ & $8.07\pm0.31$
& $1.56\pm0.13$ & $7.97\pm0.33$
& $1.47\pm0.12$ & $8.10\pm0.31$
& $1.46\pm0.13$ & $8.34\pm0.30$
& $1.87\pm0.15$ & $8.05\pm0.26$
& $1.26\pm0.12$ & $8.61\pm0.31$
& $1.81\pm0.17$ & $8.10\pm0.31$ \\
\midrule
Plug-in
& $9.59\pm0.68$ & $6.69\pm0.53$
& $9.26\pm0.55$ & $6.19\pm0.48$
& $10.26\pm0.57$ & $7.15\pm0.50$
& $10.11\pm0.62$ & $7.05\pm0.56$
& $11.02\pm0.62$ & $7.87\pm0.56$
& $11.19\pm0.67$ & $8.06\pm0.59$
& $11.08\pm0.56$ & $7.83\pm0.47$
& $12.28\pm0.68$ & $8.89\pm0.58$
& $11.00\pm0.66$ & $7.94\pm0.56$ \\
\method (\textbf{ours})
& $1.68\pm0.14$ & $4.85\pm0.27$
& $1.59\pm0.13$ & $5.19\pm0.26$
& $1.50\pm0.11$ & $4.86\pm0.25$
& $2.20\pm0.18$ & $4.13\pm0.24$
& $2.25\pm0.20$ & $4.28\pm0.33$
& $2.82\pm0.22$ & $3.74\pm0.27$
& $2.96\pm0.26$ & $3.96\pm0.26$
& $3.42\pm0.23$ & $3.03\pm0.26$
& $3.56\pm0.29$ & $3.39\pm0.28$ \\
\quad Rel. improvement
& $\greentext{82.51 \%}$ & $\greentext{27.58 \%}$
& $\greentext{82.87 \%}$ & $\greentext{16.12 \%}$
& $\greentext{85.41 \%}$ & $\greentext{32.02 \%}$
& $\greentext{78.25 \%}$ & $\greentext{41.49 \%}$
& $\greentext{79.54 \%}$ & $\greentext{45.60 \%}$
& $\greentext{74.78 \%}$ & $\greentext{53.65 \%}$
& $\greentext{73.27 \%}$ & $\greentext{49.36 \%}$
& $\greentext{72.12 \%}$ & $\greentext{65.96 \%}$
& $\greentext{67.67 \%}$ & $\greentext{57.27 \%}$ \\
\bottomrule
\end{tabular}%
}
\caption{\textbf{Comparison: plug-in estimator vs. \method.} RMSE $\pm$ SE ($\times 100$) across different functionals (area under the concentration curve [AUC] and time-above-threshold [TAT]) for different overlap levels $\rho$. $\Rightarrow$ \emph{\method outperforms the plug-in estimator.}}
\label{tab:rmse_auc_smooth_tat_deeponet}
\end{table*}

\subsection{AUC functional for PPI-style experiments}

We repeat our PPI-style experiments from Section~\ref{sec:experiments} on the AUC functional. Our results in Table~\ref{tab:auc_ppi} are consistent with the smooth continuous maximum functional. Both \method and the plug-in estimator benefit from abundant unlabeled trajectories, but \method clearly has stronger performance gains.

\begin{table}[H]
\centering
\small
\resizebox{\textwidth}{!}{%
\begin{tabular}{lcccccccc}
\toprule
 & $n_2=0$ & $n_2=64$ & $n_2=128$ & $n_2=256$ & $n_2=512$ & $n_2=1024$ & $n_2=2048$ & $n_2=4096$ \\
\midrule
 
Plug-in
& $3.20 \pm 0.29$
& $3.10 \pm 0.28$
& $3.07 \pm 0.27$
& $3.06 \pm 0.28$
& $3.05 \pm 0.27$
& $3.05 \pm 0.27$
& $3.04 \pm 0.27$
& $3.04 \pm 0.27$ \\
\quad Rel. improvement
& $0.00\%$
& $2.93\%$
& $3.97\%$
& $4.22\%$
& $4.64\%$
& $4.68\%$
& $4.90\%$
& $4.84\%$ \\
 
\midrule
\method
& $1.39 \pm 0.11$
& $1.27 \pm 0.09$
& $1.24 \pm 0.09$
& $1.22 \pm 0.09$
& $1.18 \pm 0.09$
& $1.15 \pm 0.09$
& $1.14 \pm 0.09$
& $1.12 \pm 0.08$ \\
\quad Rel. improvement
& $0.00\%$
& \greentext{$8.56\%$}
& \greentext{$11.18\%$}
& \greentext{$12.73\%$}
& \greentext{$15.49\%$}
& \greentext{$17.27\%$}
& \greentext{$18.50\%$}
& \greentext{$19.39\%$}\\
\bottomrule
\end{tabular}%
}
\caption{AUC RMSE $\pm$ SE ($\times 100$) across unlabeled sample sizes $n_2$.}
\label{tab:auc_ppi}
\end{table}

\subsection{CLT-based confidence intervals}

We report the empirical coverage for CLT-based confidence intervals in Table~\ref{tab:ci_coverage} on the AUC functional with the pharmacokinetics data. Using the  CLT is grounded in Proposition~\ref{prop:asy_appendix}.  We find that \method is fairly reliable, while the empirical coverage from simple plug-in estimation is not.

\begin{table}[H]
\centering
\small
\resizebox{\textwidth}{!}{%
\begin{tabular}{lcccccccccccccccccc}
\toprule& $\rho=0$
& $\rho=0.125$
& $\rho=0.25$
& $\rho=0.375$
& $\rho=0.5$
& $\rho=0.625$
& $\rho=0.75$
& $\rho=0.875$
& $\rho=1$ \\
\midrule
Plug-in
& $0.44$ & $0.48$ & $0.46$ & $0.40$ & $0.38$ & $0.46$ & $0.42$ & $0.52$ & $0.36$ \\
\method (ours)
& \greentext{$0.96$} & \greentext{$0.96$} & \greentext{$0.98$} & \greentext{$0.96$} & $0.92$ & \greentext{$1.00$} & $0.90$ & $0.94$ & $0.92$ \\
\bottomrule
\end{tabular}
}
\caption{Empirical coverage for $95\%$ CLT-based confidence intervals across overlap levels $\rho$ over 50 repeats.}\label{tab:ci_coverage}
\end{table}

\clearpage


\clearpage

\section{Pseudo-code for automatic DML with neural operators}\label{sec:algorithm}

\begin{algorithm}[h]
\caption{Cross-fitted \methodlong}
\label{alg:algorithm}
{\small 
\begin{algorithmic}[1]
\REQUIRE Observations $\mathcal D=\{O_i\}_{i=1}^n$, where
$O_i=(A_i,K_i,\{(X_{ik},Y_{ik})\}_{k=1}^{K_i})$; number of folds $J$; common discretization
$\{x_\ell\}_{\ell=1}^m\subset\Omega$; discretized functional $g_m:\mathbb R^m\to\mathbb R$;
regularization parameter $\lambda\ge 0$; neural operator model classes
$\mathcal S_0$ for $S$ and $\mathcal B$ for $\beta_0$
\ENSURE One-step estimate $\hat\theta$

\STATE Split $\{1,\dots,n\}$ into disjoint folds $I_1,\dots,I_J$
\FOR{$j=1,\dots,J$}
    \STATE Define training and evaluation indices
    \begin{align}
    I_j^{\mathrm{train}} \gets \{1,\dots,n\}\setminus I_j,
    \qquad
    I_j^{\mathrm{eval}} \gets I_j
    \end{align}
    
    \STATE \textbf{Train neural operator for the solution map:} 
    \begin{align}
    \hat S^{(-j)}
    \gets
    \arg\min_{S_\eta \in\mathcal S}
    \frac{1}{|I_j^{\mathrm{train}}|}
    \sum_{i\in I_j^{\mathrm{train}}}
    \frac{1}{K_i}\sum_{k=1}^{K_i}
    \big(Y_{ik}-S(A_i)(X_{ik})\big)^2
    \end{align}
    
    \STATE \textbf{Instantiate the debiasing-weight model as a neural operator:}
    let $\beta_\eta\in\mathcal B$ be a neural operator mapping
    \begin{align}
    A \mapsto \beta_\eta(A,\cdot)
    \end{align}
    Optionally, if $w_g$ is known in closed form, use the structured parameterization
    \begin{align}
    \beta_\eta(A)(x)=\xi_\eta(A)(x)\,w_g(\hat S^{(-j)}(A))(x)
    \end{align}
    with $\xi_\eta$ parameterized by a neural operator.

    \STATE \textbf{Train neural operator for the debiasing weight via Riesz regression:}
    \begin{align}
    \hat\beta^{(-j)}
    \gets
    \arg\min_{\beta_\eta\in\mathcal B}
    \frac{1}{|I_j^{\mathrm{train}}|}
    \sum_{i\in I_j^{\mathrm{train}}}
    \left[
    \frac{1}{K_i}\sum_{k=1}^{K_i}\beta_\eta(A_i)(X_{ik})^2
    -
    2\,\mathrm{JVP}(g_m,u_i;b_i)
    \right]
    +\lambda \|\eta\|^2
    \end{align}
    where
    \begin{align}
    u_i\gets \big(\hat S^{(-j)}(A_i)(x_\ell)\big)_{\ell=1}^m \in \mathbb R^m,
    \qquad
    b_i\gets \big(\beta_\eta(A_i)(x_\ell)\big)_{\ell=1}^m \in \mathbb R^m,
    \end{align}
    and
    \begin{align}
    \mathrm{JVP}(g_m,u_i;b_i)
    \gets 
    \left.\frac{\diff}{\diff t}g_m(u_i+t b_i)\right|_{t=0}.
    \end{align}
    
    \STATE \textbf{Evaluate pseudo-outcomes on the held-out fold:}
    \FOR{each $i\in I_j^{\mathrm{eval}}$}
        \STATE Compute
\begin{align}
\tilde\psi_i
\gets
g_m\!\left(\big(\hat S^{(-j)}(A_i)(x_\ell)\big)_{\ell=1}^m\right)
+
\frac{1}{K_i}\sum_{k=1}^{K_i}
\hat\beta^{(-j)}(A_i)(X_{ik})
\big(
Y_{ik}-\hat S^{(-j)}(A_i)(X_{ik})
\big).
\end{align}
    \ENDFOR
\ENDFOR

\STATE \textbf{Aggregate cross-fitted estimate:}
\begin{align}
\hat\theta
\gets
\frac{1}{n}\sum_{i=1}^n \tilde\psi_i
\end{align}

\RETURN One-step estimate $\hat\theta$
\end{algorithmic}
}
\end{algorithm}

\clearpage

\section{Empirical objective of \method via automatic differentiation}\label{sec:automatic_differentiation}

\begin{tcolorbox}[
  colback=BrickRed!8,
  colframe=BrickRed!75!black,
  boxrule=0.8pt,
  arc=1.5mm,
  left=1.2mm,
  right=1.2mm,
  top=1mm,
  bottom=1mm,
  title=\textbf{Empirical objective and implementation of \method},
  fonttitle=\bfseries,
  coltitle=white,
  enhanced,
  breakable
]
We directly parameterize the correction weight $\hat\beta(\cdot)(\cdot)$ using a neural
operator, and minimize the empirical analogue of \Eqref{eq:primal_riesz}. Specifically, let $\hat S$, $\hat \beta$ denote a cross-fitted estimators of $S_0$, $\beta_0$, respectively. For each observation $A_i$, we evaluate both $\hat S(A_i)(\cdot)$ and $\hat\beta(A_i)(\cdot)$ on a common discretization $\{x_\ell\}_{\ell=1}^m \subset \Omega$, which yields
\begin{align}
u_i \coloneqq \big(\hat S(A_i)(x_\ell)\big)_{\ell=1}^m \in \mathbb R^m,
\qquad
b_i \coloneqq \big(\hat\beta(A_i)(x_\ell)\big)_{\ell=1}^m \in \mathbb R^m.
\end{align}
Let $g_m:\mathbb R^m \to \mathbb R$ denote the discretized implementation of
the functional $g$.\footnotemark  The empirical risk is then given by
\begin{align}
\min_{\hat\beta}
\frac{1}{n}\sum_{i=1}^n
\left[
\frac{1}{K_i}\sum_{k=1}^{K_i}
\hat\beta(A_i)(X_{ik})^2
-
2\,\mathrm{JVP}(g_m,u_i;b_i)
\right]
+\lambda\|\hat\beta\|^2,
\label{eq:primal_empirical_jvp}
\end{align}
where the directional derivative is computed as a Jacobian--vector product
\begin{align}
\mathrm{JVP}(g_m,u_i;b_i)
=
\left.\frac{\diff}{\diff t} g_m(u_i+t\,b_i)\right|_{t=0}
=
J_{g_m}(u_i)\,b_i.
\end{align}
This quantity can be evaluated efficiently via automatic differentiation (e.g., in
forward mode) and thus \textit{without} explicitly forming the Jacobian matrix.
\end{tcolorbox}
\footnotetext{Of note, while our theory is formulated in function space for the continuous functional $g$, the implemented estimator targets a numerical approximation to the population quantity. The resulting discretization and quadrature error is separate from the statistical error analyzed here.}

Importantly, even though the objective of \method in \Eqref{eq:primal_riesz} is formulated in function space, our empirical implementation in \method reduces to differentiating the scalar computational graph $t\mapsto g_m(u_i+t b_i)$ at $t=0$. In particular, \method requires only that both $\hat S(A)(\cdot)$ and $\hat\beta(A)(\cdot)$ can be evaluated on a common discretization of $\Omega$, which is naturally satisfied by neural operator architectures.

\clearpage
\section{Implementation details}\label{sec:implementation_details}

$\bullet$~\textbf{Neural operator implementation.}
To make the comparison in Section~\ref{sec:experiments} and Supplement~\ref{sec:additional_results} fair, \method and the plug-in baseline use the \textit{exact same }neural backbone. Specifically, the plug-in estimator only consists of the solution operator $\hat S$, which is shared for both methods (see Table~\ref{tab:implementation_backbones}).

\begin{table}[h]
\centering

\small
\resizebox{\textwidth}{!}{%
\begin{tabular}{llll}
\toprule
Setting & Operator(s) & Backbone & Architecture specification \\
\midrule
Pharmacokinetics (main)
& $\hat S$, $\hat\beta$
& FNO \cite{Li.2021NO}
& $4$ input channels, $32$ hidden channels, $1$ output channel, $3$ Fourier layers, $12$ retained modes
\\

2D Darcy-flow (main)
& $\hat S$, $\hat\beta$
& FNO \cite{Li.2021NO}
& $3$ input channels, $24$ hidden channels, $1$ output channel, $3$ Fourier layers, $8\times 8$ retained modes
\\

Pharmacokinetics (ablation)
& $\hat S$, $\hat\beta$
& DeepONet \cite{Lu.2021NO}
& $4$ input channels, $32$ branch hidden channels, $32$ trunk hidden channels, latent dimension $32$, $1$ output channel
\\
\bottomrule
\end{tabular}
}\caption{Neural operator architectures used in the main experiments and ablation study. For fairness, \method and the plug-in baseline share the same solution-operator backbone $\hat S$ within each setting.}
\label{tab:implementation_backbones}
\end{table}

\textbf{Pharmacokinetics dataset:} Both nuisance components, namely the solution operator $\hat S$ and the debiasing operator $\hat\beta$, are implemented using the same Fourier neural operator (FNO) \cite{Li.2021NO} architecture. The shared model configuration uses the FNO family with $4$ input channels, $32$ hidden channels, $1$ output channel, $3$ Fourier layers, and $12$ retained modes. 
Both modules are trained with Adam optimization \cite{Kingma.2015} settings given by learning rate $10^{-3}$, weight decay $10^{-5}$, batch size $8$, and a maximum of $20$ epochs (see Table~\ref{tab:implementation_training}). 

\textbf{2D Darcy-flow dataset:} Here, $\hat S$ and $\hat \beta$ are also implemented with the same Fourier neural architecture. Specifically, they use the FNO family with $3$ input channels, $24$ hidden channels, $1$ output channel, $3$ Fourier layers, and $8$ retained modes for both $x$- and $y$-dimension, respectively. Both are learned with Adam optimization \cite{Kingma.2015} and learning rate $10^-3$, weight decay $10^-5$, batch size $8$, and a maximum of $20$ epochs (see Table~\ref{tab:implementation_training}).

\textbf{Ablation instantiation:} In Supplement~\ref{sec:additional_results}, we report a variant of our main results on the pharmacokinetics dataset, where we replace the Fourier neural operator with a  DeepONet \cite{Lu.2021NO}). For both the solution operator $\hat S$ of the plug-in estimator and \method, as well as the debiasing operator $\hat \beta$ of \method, the DeepONets have $4$ input channels, $32$ {branch} hidden channels, $32$ {trunc} hidden channels, a latent dimension of $32$, and $1$ output channel. They are trained via Adam \cite{Kingma.2015} with a learning rate of $10^-3$, a weight decay of $10^-5$, batch size $8$, and a maximum of $20$ epochs  (see Table~\ref{tab:implementation_training}).

\begin{table}[h]
\centering

\small
\resizebox{\textwidth}{!}{%
\begin{tabular}{lll}
\toprule
Component & Setting & Specification \\
\midrule
Optimizer
& all models
& Adam-style optimization with learning rate $10^{-3}$ and weight decay $10^{-5}$
\\

Batch size / epochs
& all models
& batch size $8$, maximum of $20$ epochs
\\

Solution operator training
& $\hat S$
& trained to predict the latent trajectory from $A_i$ using mean squared error evaluated only at the sampled observation locations
\\

Debiasing operator training
& $\hat\beta$ for \method
& trained separately via the implemented Riesz-style objective with the same optimizer hyperparameters as $\hat S$
\\

Riesz penalty
& $\hat\beta$ for \method
& $\lambda_{\mathrm{Riesz}} = 0.1$
\\

Cross-fitting
& final experiments
& $2$-fold cross-fitting on the test split
\\

Repeated runs
& final experiments
& results aggregated over $50$ repetitions
\\
\bottomrule
\end{tabular}
}\caption{Training and evaluation details used throughout the experiments.}
\label{tab:implementation_training}
\end{table}

$\bullet$~\textbf{Solution operator training.}
For both \method and the plug-in estimator, the neural operator $\hat S$ is trained to predict the full latent concentration trajectory from the functional input $A_i$. Its loss is the mean squared prediction error evaluated only at the subject-specific sampled observation times.

$\bullet$~\textbf{Debiasing operator training.}
For \method, the correction operator $\hat\beta$ is parameterized by the same FNO architecture, and trained separately through the implemented Riesz-style objective used by the one-step estimator. The corresponding debiasing module uses the same optimizer hyperparameters as the solution module and includes the penalty parameter $\lambda_{\mathrm{Riesz}} = 0.1.$

$\bullet$~\textbf{Cross-fitting and evaluation.}
The final experiments use $2$-fold cross-fitting on the test split. 
The reported results aggregate repeated-run summaries over $50$ repetitions.

$\bullet$~\textbf{Runtime.} For each method, training took approximately $30$ minutes over $50$ repeats with an AMD Ryzen 7 Pro CPU and 32GB of RAM. The runtime was comparable for \method and the plug-in estimator.

\clearpage


\section{Additional details on the  data}\label{sec:dgp}

\subsection{Pharmacokinetic model}\label{sec:pk_dgp}

We consider a pharmacokinetics benchmark in which each subject is characterized by a function-valued input $A_i$ consisting of a subject-specific dosing-rate profile together with subject-level clearance and volume parameters. The resulting latent concentration trajectory $u(A_i, \cdot)$ follows a standard one-compartment pharmacokinetic model \cite{Kwon.2002PK, Upton.2016PK}, where clearance governs elimination, volume determines how the administered drug is translated into concentration, and the time-varying dosing profile allows for heterogeneous infusion- or pulse-like regimens across subjects. 

The corresponding scientific targets are not the full concentration trajectories themselves, but rather low-dimensional summaries such as exposure, peak burden, or time above a clinically relevant threshold. 
We observe each trajectory only at sparse time points and vary the degree of input-dependent sampling through an irregularity parameter $\rho$, where $\rho=0$ corresponds to approximately uniform monitoring and larger values of $\rho$ increasingly concentrate observations near the subject-specific peak region.

We now introduce the exact data-generating process for our pharmacokinetics benchmark study. We summarize all parameters in Table~\ref{tab:pk_dgp_overview}.

\begin{table}[h]
\centering
\caption{Overview of the pharmacokinetics benchmark configuration used in the main study.}
\label{tab:pk_dgp_overview}
\small
\begin{tabular}{ll}
\toprule
Component & Specification \\
\midrule
Time horizon & $T=24$ \\
Grid size & $\Delta=128$ \\
Reference measure & normalized trapezoid rule on $[0,T]$ \\
Input channels & $[r_i(t), \log CL_i, \log V_i]$ \\
PK parameter means & $(0.0, 1.0)$ for $(\log CL_i,\log V_i)$ \\
PK parameter standard deviations & $(0.25, 0.20)$ \\
PK parameter correlation & $0.4$ \\
Dosing family & \texttt{one\_or\_two\_pulse} \\
Number of pulses & $M_i \in \{1,2\}$ uniformly \\
Pulse amplitude range & $[0.5, 2.0]$ \\
Pulse duration range & $[1.0, 4.0]$ \\
Observation design & \texttt{peak\_window\_mixture} \\
Observations per subject & $K=24$ \\
Noise standard deviation & $\sigma_\varepsilon = 0.002$ \\
Peak bandwidth & $h=5.0$ \\
Irregularity grid & $\rho \in \{0.00,0.125,\dots,1.00\}$ \\
Dataset split sizes & $(256,64,64)$ for train/val/test \\
Repeated runs & $50$ \\
Cross-fitting & $2$ folds \\
\bottomrule
\end{tabular}
\end{table}

$\bullet$~\textbf{Governing model:} We consider a one-dimensional pharmacokinetics benchmark  on the time horizon $[0,T]$ with $T=24$. The latent concentration trajectory $u_i$ for subject $i$ solves the one-compartment dynamics
\begin{align}
\frac{\diff u_i}{\diff t}(t) = -\frac{CL_i}{V_i}u_i(t) + \frac{1}{V_i}r_i(t),
\qquad
u_i(0)=0,
\end{align}
where $r_i$ is a subject-specific dosing-rate profile and $(CL_i,V_i)$ are clearance and volume parameters.

At the operator-learning level, each subject is represented through the function-valued input
\begin{align}
A_i(t_\delta) = \bigl[r_i(t_\delta), \log CL_i, \log V_i\bigr] \in \mathbb{R}^3,
\qquad \delta=1,\dots,\Delta,
\end{align}
where the last two channels are constant over the grid. We use a uniform grid of size $\Delta=128$ with points
\begin{align}
t_\delta = \frac{\delta-1}{\Delta-1}T, \qquad \delta=1,\dots,\Delta.
\end{align}

The subject-specific PK parameters are sampled as
\begin{align}
\begin{pmatrix}
 CL_i^{\text{log}} \\
 V_i^{\text{log}}
\end{pmatrix}
\sim
\mathcal{N}
\left(
\begin{pmatrix}
0.0 \\
1.0
\end{pmatrix},
\Sigma
\right),
\qquad
\Sigma =
\begin{pmatrix}
0.25^2 & 0.4 \cdot 0.25 \cdot 0.20 \\
0.4 \cdot 0.25 \cdot 0.20 & 0.20^2
\end{pmatrix},
\end{align}
and we set
\begin{align}
CL_i = \exp(CL_i^{\text{log}}), \qquad V_i = \exp( V_i^{\text{log}}).
\end{align}

The dosing profile follows the \texttt{one\_or\_two\_pulse} family. Specifically,
\begin{align}
M_i \sim \mathrm{Unif}\{1,2\}.
\end{align}
For each pulse $m=1,\dots,M_i$, we draw independently
\begin{align}
D_{im} \sim \mathrm{Unif}[1.0,4.0], \qquad
S_{im} \sim \mathrm{Unif}[0,T-D_{im}], \qquad
c_{im} \sim \mathrm{Unif}[0.5,2.0].
\end{align}
The dosing-rate profile is then
\begin{align}
r_i(t_\delta) = \sum_{m=1}^{M_i} c_{im}\,
\mathbf{1}\!\left\{S_{im} \le t_\delta \le \min \{ T,S_{im}+D_{im} \} \right\}.
\end{align}

$\bullet$~\textbf{Implementation level:} Trajectories are generated on the grid using the recursion
\begin{align}
u_{i,1}=0,
\end{align}
and, for $\delta=2,\dots,\Delta$,
\begin{align}
u_{i,\delta}
=
e^{-\kappa_i \Delta t}u_{i,\delta-1}
+
\frac{r_i(t_{\delta-1})}{V_i}\frac{1-e^{-\kappa_i \Delta t}}{\kappa_i},
\qquad
\kappa_i=\frac{CL_i}{V_i},
\end{align}
with the small-$\kappa_i$ fallback
\begin{align}
u_{i,\delta} = u_{i,\delta-1} + \Delta t \frac{r_i(t_{\delta-1})}{V_i}.
\end{align}

Quadrature uses normalized trapezoid weights
\begin{align}
w_1 = w_\Delta = \frac{1}{2(\Delta-1)}, \qquad
w_\delta = \frac{1}{\Delta-1}, \quad \delta=2,\dots,\Delta-1,
\end{align}
so that $\sum_{\delta=1}^\Delta w_\delta = 1$. The corresponding discrete inner product is
\begin{align}
\langle f,h\rangle_w = \sum_{\delta=1}^\Delta w_\delta f_\delta h_\delta.
\end{align}

$\bullet$~\textbf{Sampling mechanism and irregularity parameter:} 
We observe each latent trajectory only at \emph{sparse} time points. In all reported experiments, each subject contributes $K=24$ observations.

For subject $i$, let
\begin{align}
\delta_i^\star = \arg\max_{\delta=1,\dots,\Delta} u_{i,\delta},
\qquad
\tau_i = t_{\delta_i^\star},
\end{align}
denote the grid location and time of the latent peak concentration. We define a peak-window mass function by
\begin{align}
q^{\mathrm{window}}_{i\delta}
=
\frac{\mathbf{1}\{|t_\delta-\tau_i|\le h\}}
{\sum_{\ell=1}^\Delta \mathbf{1}\{|t_\ell-\tau_i|\le h\}},
\qquad h=5.0.
\end{align}
The observation design is a mixture
\begin{align}
p_{i\delta}(\rho)
=
(1-\rho/5)\frac{1}{\Delta} + \rho/5 \, q^{\mathrm{window}}_{i\delta}.
\end{align}
Hence, $\rho=0$ corresponds to uniform monitoring over the grid, while larger values of $\rho$ increasingly concentrate observations near the subject-specific peak region. In the main study, we use
\begin{align}
\rho \in \{0.00,0.125,0.25,0.375,0.50,0.625,0.75,0.875,1.00\}.
\end{align}

Given $p_{i\delta}(\rho)$, we sample $K=24$ distinct grid indices without replacement from $\{1,\dots,\Delta\}$, sort them, and observe
\begin{align}
Y_{ik} = u_i(t_{J_{ik}}) + \varepsilon_{ik},
\qquad
\varepsilon_{ik}\sim \mathcal{N}(0,\sigma_\varepsilon^2),
\qquad
\sigma_\varepsilon=0.002.
\end{align}

For \textbf{oracle calculations}, the inverse-design weight on the grid is
\begin{align}
\xi_i(t_\delta) = \frac{w_\delta}{p_{i\delta}(\rho)}.
\end{align}

\clearpage

\begin{table}[H]
\centering
\small
\resizebox{\textwidth}{!}{%
\begin{tabular}{llll}
\toprule
Functional & Definition & Implementation & Hyperparameters \\
\midrule
\texttt{auc}
& $    g(u) = (1 / T) \int_0^T u(t) \diff t  $
& $\displaystyle g(u)=\sum_{\delta=1}^\Delta w_\delta u_\delta$
& none \\
\texttt{tat}
& $    g(u) = (1 / T) \int_0^T \sigma (\kappa  (u(t) - c^\star)) \diff t $
& $\displaystyle g(u)=\sum_{\delta=1}^\Delta w_\delta\,\sigma\!\bigl(\kappa(u_\delta-c^\star)\bigr)$
& $\kappa=8.0,\; c^\star=0.5$ \\
\texttt{soft\_cmax}
& $g(u) = (1 / \lambda)  \log \big[ (1 / T) \int_0^T \exp(\lambda  u(t)) \diff t\big]$
& $\displaystyle g(u)=\frac{1}{\lambda}\log\!\left(\sum_{\delta=1}^\Delta w_\delta e^{\lambda u_\delta}\right)$
& $\lambda=6.0$ \\
\bottomrule
\end{tabular}
}
\caption{Functional-specific hyperparameters used in the reported pharmacokinetics experiments.}
\label{tab:pk_functionals_overview}
\end{table}

$\bullet$~\textbf{Functional estimation:} The population target is the mean functional
\begin{align}
\theta = \mathbb{E}[g(S_0(A))],
\end{align}
where we use $2000$ datapoints to compute the ground-truth average functional. 

Table~\ref{tab:pk_functionals_overview} lists the functional-specific hyperparameters. More details on the closed-form functional derivatives and their Riesz representers are given in Supplement~\ref{sec:riesz_examples}. All experiments are repeated $50$ times.

\subsection{Darcy-flow model}\label{sec:darcy_dgp}

We consider a two-dimensional Darcy-flow benchmark in which each sample is characterized by a spatially varying coefficient field $a_i$ on the unit square, and the latent scientific state is the corresponding solution field $u(a_i,\cdot)$ of an elliptic PDE. The operator-learning task is therefore to map coefficient fields to solution fields, but the inferential target is not the full field itself: instead, we study low-dimensional functionals of the solution, namely smooth excess-above-threshold summaries of the spatial field. The Darcy-flow model is \emph{standard for benchmarking neural operators} \cite{Kovachki.2023NO}.

We observe each latent solution only at a sparse set of noisy spatial locations and allow the observation design to depend on the realized coefficient and solution field. The observation design is fully adaptive, and concentrates samples in regions with small coefficients and rapidly varying solutions. 

We now introduce the exact data-generating process for our Darcy-flow benchmark study. We summarize all benchmark parameters in Table~\ref{tab:darcy_dgp_overview}.

\begin{table}[h]
\centering
\caption{Overview of the Darcy-flow benchmark configuration used in the main study.}
\label{tab:darcy_dgp_overview}
\small
\begin{tabular}{ll}
\toprule
Component & Specification \\
\midrule
Spatial domain & $\Omega=(0,1)^2$ \\
Grid size & $H=W=17$ \\
Reference measure & normalized 2D trapezoid rule on $\Omega$ \\
Coefficient family & \texttt{piecewise\_constant\_transformed\_gaussian\_field} \\
Coarse coefficient grid & $5\times 5$ \\
Number of cosine basis modes & $3\times 3$ \\
Log-coefficient scale & $0.65$ \\
Lower coefficient bound & $0.2$ \\
Observation design & \texttt{inverse\_energy\_mixture} \\
Observations per sample & $K=24$ \\
Noise standard deviation & $\sigma_\varepsilon = 0.01$ \\
Adaptive design temperature & $\lambda = 7.0$ \\
Adaptive design floor & $\varepsilon = 10^{-3}$ \\
Irregularity parameter & $\rho = 1.0$ \\
Dataset split sizes & $(256,64,64)$ for train/val/test \\
Truth-pool size & $2000$ \\
Repeated runs & $50$ \\
Cross-fitting & $2$ folds \\
\bottomrule
\end{tabular}
\end{table}

$\bullet$~\textbf{Governing model:}
We consider the elliptic Darcy problem on $\Omega=(0,1)^2$,
\begin{align}
-\nabla \cdot \bigl(a_i(x)\nabla u_i(x)\bigr) = 1,
\qquad x\in (0,1)^2,
\end{align}
with homogeneous Dirichlet boundary conditions
\begin{align}
u_i(x)=0,
\qquad x\in \partial(0,1)^2.
\end{align}
Thus, the forcing term is constant and equal to $1$, and all sample-specific heterogeneity enters through the random coefficient field $a_i$.

At the operator-learning level, each sample is represented by a coefficient field on a tensor-product grid with
\begin{align}
x_p = \frac{p-1}{H-1}, \qquad p=1,\dots,H,
\qquad
y_q = \frac{q-1}{W-1}, \qquad q=1,\dots,W,
\end{align}
where $H=W=17$, and grid locations
\begin{align}
X_{pq}=(x_p,y_q)\in [0,1]^2.
\end{align}

The coefficient field is sampled from the family
\texttt{piecewise\_constant\_transformed\_gaussian\_field}. First, define a coarse $5\times 5$ grid by
\begin{align}
\bar x_r = \frac{r-1}{5-1}, \qquad r=1,\dots,5,
\qquad
\bar y_s = \frac{s-1}{5-1}, \qquad s=1,\dots,5.
\end{align}
For each pair of modes $(m_x,m_y)\in\{1,2,3\}^2$, we use the cosine basis
\begin{align}
\phi_{m_x,m_y}(\bar x_r,\bar y_s)
=
\sqrt{2}\cos(\pi m_x \bar x_r)\,
\sqrt{2}\cos(\pi m_y \bar y_s).
\end{align}
Independent Gaussian coefficients $\xi_{m_x,m_y}\sim\mathcal N(0,1)$ are drawn, and the latent coarse field is
\begin{align}
Z_{rs}
=
\frac{0.65}{3}
\sum_{m_x=1}^3\sum_{m_y=1}^3
\xi_{m_x,m_y}\,\phi_{m_x,m_y}(\bar x_r,\bar y_s).
\end{align}
The coarse coefficient field is then obtained by exponentiation and lower clipping,
\begin{align}
\bar a_{rs} = \exp(Z_{rs}),
\qquad
\bar a_{rs}\leftarrow \max \{ \bar a_{rs},0.2 \}.
\end{align}
Finally, the coarse field is mapped to the $17\times 17$ grid by nearest-cell replication in both spatial directions, yielding a piecewise-constant coefficient field on the fine grid.

$\bullet$~\textbf{Implementation level:}
The PDE is solved on the $17\times 17$ grid by a finite-difference discretization of the divergence-form operator. Let
\begin{align}
h_x=h_y=\frac{1}{17-1}.
\end{align}
On interior grid points, the code assembles the linear system corresponding to
\begin{align}
-\partial_x(a\,\partial_x u)-\partial_y(a\,\partial_y u)=1.
\end{align}
To preserve flux continuity across coefficient jumps, harmonic means are used at cell interfaces. For example, at the west interface,
\begin{align}
a_{i-\frac12,j}^{(h)}
=
\frac{2 a_{ij} a_{i-1,j}}{a_{ij}+a_{i-1,j}},
\end{align}
with analogous formulas for east, north, and south interfaces.

The resulting interior system has dimensions
\begin{align}
(17-2) \times (17-2)=225,
\end{align}
is assembled explicitly as a dense matrix, and is solved using \texttt{numpy.linalg.solve}. Boundary values are then set to zero to obtain the full $17\times 17$ solution field.

Quadrature uses a normalized two-dimensional trapezoid rule. If $w^{(x)}$ and $w^{(y)}$ denote the one-dimensional trapezoid weights,
\begin{align}
w^{(x)}_1=w^{(x)}_H=\frac12, \qquad
w^{(x)}_p=1 \ \text{for } p=2,\dots,H-1,
\end{align}
and analogously in the $y$-direction, then the unnormalized two-dimensional weights are
\begin{align}
\widetilde w_{pq}=w^{(x)}_p w^{(y)}_q.
\end{align}
These are normalized to sum to one, i.e.,
\begin{align}
w_{pq}
=
\frac{\widetilde w_{pq}}{\sum_{r=1}^H\sum_{s=1}^W \widetilde w_{rs}}.
\end{align}
The corresponding discrete inner product is
\begin{align}
\langle f,h\rangle_w
=
\sum_{p=1}^H\sum_{q=1}^W w_{pq} f_{pq} h_{pq}.
\end{align}

$\bullet$~\textbf{Sampling mechanism and irregularity parameter:}
We observe each latent solution field only at a sparse set of spatial locations. In all reported experiments, each sample contributes
\begin{align}
K=24
\end{align}
observations.

The observation design is \texttt{inverse\_energy\_mixture}. Let $a_{pq}$ denote the coefficient field and $u_{pq}$ the latent solution. The code first constructs the inverse-coefficient factor
\begin{align}
a^{-1}_{pq} = \frac{1}{\max \{ a_{pq},10^{-8} \} }.
\end{align}
It then computes a discrete gradient magnitude using centered finite differences,
\begin{align}
G_{pq}
=
\sqrt{
\bigl(u_{p+1,q}-u_{p-1,q}\bigr)^2
+
\bigl(u_{p,q+1}-u_{p,q-1}\bigr)^2
}.
\end{align}
The resulting saliency map is
\begin{align}
S_{pq}=a^{-1}_{pq}\bigl(0.5+|u_{pq}|+G_{pq}\bigr).
\end{align}
After centering and normalizing this field by its maximum absolute magnitude, writing the normalized saliency as $\widetilde S$, the adaptive design distribution is
\begin{align}
q_{pq}
\propto
\exp\!\bigl(\lambda |\widetilde S_{pq}|\bigr)+\varepsilon,
\qquad
\lambda=7.0,
\qquad
\varepsilon=10^{-3}.
\end{align}

Hence, the observation design concentrates entirely according to the adaptive saliency map.

Given $p_{pq}(\rho)$, we sample $K=24$ distinct grid locations without replacement, sort the flattened indices, and observe
\begin{align}
Y_{ik}=u_i(X_{J_{ik}})+\varepsilon_{ik},
\qquad
\varepsilon_{ik}\sim\mathcal N(0,\sigma_\varepsilon^2),
\qquad
\sigma_\varepsilon=0.01.
\end{align}


\begin{table}[h]
\centering
\small
\resizebox{\textwidth}{!}{%
\begin{tabular}{llll}
\toprule
Functional & Definition & Implementation & Hyperparameters \\
\midrule
\texttt{smooth\_excess\_above\_threshold}
&
$\displaystyle
g_{\mathrm{exc}}(u)
=
\int_\Omega
\frac{1}{\kappa}\log\!\bigl(1+\exp(\kappa^\star(u(x)-c))\bigr)\,\diff\mu(x)
$
&
$\displaystyle
g_{\mathrm{exc}}(u)
=
\sum_{p=1}^{H}\sum_{q=1}^{W}
w_{pq}\,
\frac{1}{\kappa}
\log\!\bigl(1+\exp(\kappa(u_{pq}-c))\bigr)
$
&
$c=0.5,\; \kappa^\star = 7.5 + 2.5\kappa; 
\kappa\in\{0.0,0.2,0.4,0.6,0.8,1.0\}$
\\
\bottomrule
\end{tabular}
}\caption{Functional-specific hyperparameters used in the reported Darcy-flow experiments.}
\label{tab:darcy_functionals_overview}
\end{table}

$\bullet$~\textbf{Functional estimation:} The population target is the mean functional
\begin{align}
\theta=\mathbb E[g(S_0(A))],
\end{align}
where the benchmark ground truth is computed from a disjoint truth pool of size $2000$. The estimator itself is evaluated on the held-out test split of size $64$, and all reported results are aggregated over $50$ repeated runs.

For the smooth excess-above-threshold functional, the corresponding Riesz representer under the benchmark reference measure is the pointwise sigmoid field
\begin{align}
w_g(u)(x)=\sigma\bigl(\kappa^\star(u(x)-c)\bigr),
\qquad
\sigma(z)=\frac{1}{1+e^{-z}}.
\end{align}
In discrete form, the same pointwise expression is evaluated on the spatial grid (see Table~\ref{tab:darcy_functionals_overview}). More details on the closed-form functional derivatives and their Riesz representers are given in Supplement~\ref{sec:riesz_examples}.

\clearpage

\section{Examples of functional derivatives and their Riesz representers}\label{sec:riesz_examples}
In some cases, the functional derivatives $\Diff g_u (h)$ and their Riesz representers $w_g(u)$ can be computed analytically. In the following, we summarize the functionals used in our experiments (Section~\ref{sec:experiments}), and provide closed-form solutions for the functional derivatives and their Riesz representers, which allows us to properly validate our results.

Note that these closed-form solutions are not always easily tractable but can involve complex derivations. Yet, \method is applicable even when these closed-form solutions are \textbf{\underline{not}} available (see Section~\ref{sec:onestep} and Section~\ref{sec:autodml}). In the following, all integrals are w.r.t. Lebesgue-measure $\mu$.

$\bullet$~\textbf{Normalized AUC:}
The normalized area under the (concentration) curve is given by
\begin{align}
    g(u) = (1 / T) \int_0^T u(t) \diff t    .
\end{align}

The corresponding Fréchet derivative in direction $h$ is
\begin{align}
    \Diff g_{u}(h) = (1 / T) \int _0^T h(t)  \diff t.
\end{align}
    
Then, its Riesz representer under $\mu(\diff t)=\diff t/T$ is
\begin{align}
    w_{}(u)(t) = 1.
\end{align}
For our experiments, this functional is especially interesting as it isolates the inverse-design component; $w_g$ is constant, and therefore $\beta_0(\cdot)(\cdot) = \xi_0(\cdot)(\cdot)$.

$\bullet$~\textbf{Time-above-threshold:}
We define a smooth version of the time-above-threshold (TAT) as
\begin{align}
    g(u) = (1 / T) \int_0^T \sigma (\kappa  (u(t) - c^\star)) \diff t    ,
\end{align}
where $c^\star$ is a clinically meaningful concentration threshold, $\kappa > 0$ controls the sharpness of the smooth threshold, and $\sigma(z) = 1 / (1 + \exp(-z))$. This is a smooth version of the time-above-threshold / time-in-range functional, since
\begin{align}
\sigma(\kappa x)
\;\xrightarrow[\kappa \to \infty]{}\;
\mathbf{1}\{x \ge 0\},
\end{align}
so, pointwise in \(t\),
\begin{align}
\sigma\big(\kappa (u(t)-c^\star)\big)
\;\to\;
\mathbf{1}\{u(t) \ge c^\star\}.
\end{align}
Therefore,
\begin{align}
g_{\mathrm{tat}}(u)
=
\frac{1}{T}\int_0^T \sigma\big(\kappa (u(t)-c^\star)\big)\, \diff t
\;\to\;
\frac{1}{T}\int_0^T \mathbf{1}\{u(t) \ge c^\star\}\, \diff t.
\end{align}

The corresponding Fréchet derivative in direction h is
\begin{align}
    \Diff g_u(h)
      = (1 / T) \int_0^T
          \kappa  \sigma(\kappa  (u(t) - c^\star))
                (1 - \sigma(\kappa  (u(t) - c^\star)))
                h(t) \diff t.
\end{align}
Its Riesz representer under $\mu(\diff t)=\diff t/T$ is
\begin{align}
    w_g(u)(t)
      = \kappa \sigma(\kappa  (u(t) - c^\star))
                 (1 - \sigma(\kappa  (u(t) - c^\star))).
\end{align}
    
This functional is PK-motivated and gives a nontrivial, threshold-localized sensitivity profile.

$\bullet$~\textbf{Soft cmax:}
We define the soft cmax functional as
\begin{align}
    g(u) = (1 / \lambda)  \log \big[ (1 / T) \int_0^T \exp(\lambda  u(t)) \diff t\big],
\end{align}   
where the temperature parameter $\lambda > 0$ controls how sharply this approximates peak concentration.
This is a standard smooth proxy for peak concentration ``cmax'' (continuous maximum functional)
\begin{align}
\text{cmax}(u) \coloneqq \sup_{t \in [0,T]} u(t),
\end{align}
since
\begin{align}
\frac{1}{\lambda}\log\!\left(\frac{1}{T}\int_0^T e^{\lambda u(t)}\, \diff t\right)
\;\longrightarrow\;
\sup_{t \in [0,T]} u(t)
\quad \text{as } \lambda \to +\infty.
\end{align}

Its Fréchet derivative in direction $h$ is
\begin{align}
    \Diff g_u (h)
      = (1 / T) \int_0^T
          [ \exp(\lambda  u(t)) / ((1 / T) \int_0^T \exp(\lambda u(s)) \diff s) ]
           h(t) \diff t.
\end{align}
Equivalently, under $\mu(\diff t)=\diff t/T$,
\begin{align}
    D_gu(h)
      =   \int_0^T w_g(u)(t) h(t) \mu(\diff t),
\end{align}
with Riesz representer
\begin{align}
    w_g(u)(t)
      = \exp(\lambda u(t)) / [ (1 / T) \int_0^T \exp(\lambda  u(s)) \diff s ].
\end{align}
    
This functional is peak-sensitive, nonlinear, and temporally varying, while still having a closed-form derivative.

$\bullet$~\textbf{Smooth excess-above-threshold:}
We define the smooth excess-above-threshold functional as
\begin{align}
    g(u)
    =
    \int_\Omega
    \frac{1}{\kappa}\log\!\bigl(1+\exp(\kappa^\star (u(x)-c))\bigr)\,\diff\mu(x),
\end{align}
where the sharpness parameter $\kappa^\star>0$ controls how rapidly the functional transitions around the threshold $c$.
This is a standard smooth proxy for the excess-above-threshold functional
\begin{align}
    \mathrm{excess}_c(u)
    \coloneqq
    \int_\Omega (u(x)-c)_+\,\diff\mu(x),
\end{align}
since
\begin{align}
    \frac{1}{\kappa^\star}\log\!\bigl(1+\exp(\kappa^\star (u(x)-c))\bigr)
    \;\longrightarrow\;
    (u(x)-c)_+
    \quad \text{as } \kappa^\star \to +\infty
\end{align}
pointwise in $x$, and hence
\begin{align}
    g(u)
    \;\longrightarrow\;
    \int_\Omega (u(x)-c)_+\,\diff\mu(x)
    \quad \text{as } \kappa^\star \to +\infty.
\end{align}

Its Fréchet derivative in direction $h$ is
\begin{align}
    \Diff g_u(h)
    =
    \int_\Omega
    \sigma\!\bigl(\kappa^\star (u(x)-c)\bigr)\,h(x)\,\diff\mu(x),
    \qquad
    \sigma(z)=\frac{1}{1+e^{-z}}.
\end{align}
Equivalently,
\begin{align}
    \Diff g_u(h)
    =
    \int_\Omega w_g(u)(x)\,h(x)\,\diff\mu(x),
\end{align}
with Riesz representer
\begin{align}
    w_g(u)(x)
    =
    \sigma\!\bigl(\kappa^\star (u(x)-c)\bigr).
\end{align}

This functional is nonlinear and threshold-sensitive, while still admitting a closed-form derivative and a simple Riesz representer. Note that in our experiments, we use a linear transformation $\kappa^*\coloneqq f(\kappa)$ (see Supplement~\ref{sec:darcy_dgp}).

\newpage

\end{document}